\documentclass{article}
\usepackage{lipsum}
\usepackage{microtype}
\usepackage{graphicx}
\usepackage{subcaption}
\usepackage{booktabs} 

\usepackage{hyperref}

\usepackage[accepted]{icml2026}

\usepackage{amsmath}
\usepackage{amssymb}
\usepackage{mathtools}
\usepackage{amsthm}

\usepackage[capitalize,noabbrev]{cleveref}

\theoremstyle{plain}
\newtheorem{theorem}{Theorem}[section]

\newtheorem{lemma}[theorem]{Lemma}

\theoremstyle{definition}

\theoremstyle{remark}

\newcommand{\indep}{\perp\!\!\!\!\perp} 

\newcommand{\E}{\mathbb{E}}          

\newcommand{\bzero}{\mathbf{0}}
\newcommand{\bm}{\mathbf{m}}
\newcommand{\bx}{\mathbf{x}}
\newcommand{\by}{\mathbf{y}}
\newcommand{\bz}{\mathbf{z}}
\newcommand{\bw}{\mathbf{w}}
\newcommand{\bb}{\mathbf{b}}
\newcommand{\bu}{\mathbf{u}}
\newcommand{\bv}{\mathbf{v}}

\usepackage[textsize=tiny]{todonotes}
\icmltitlerunning{Zero-Flow Encoders}

\begin{document}

\twocolumn[
  \icmltitle{Zero-Flow Encoders}



  \icmlsetsymbol{equal}{*}

  \begin{icmlauthorlist}
    \icmlauthor{Yakun Wang}{equal,uob}
    \icmlauthor{Leyang Wang}{equal,riken}
    \icmlauthor{Song Liu}{uob}
    \icmlauthor{Taiji Suzuki}{riken,ut}
  \end{icmlauthorlist}

  \icmlaffiliation{uob}{School of Mathematics, University of Bristol, Bristol, UK}    \icmlaffiliation{riken}{Center for Advanced Intelligence Project, RIKEN, Tokyo, Japan}
  \icmlaffiliation{ut}{Department of Mathematical Informatics, the University of Tokyo, Tokyo, Japan}

  \icmlcorrespondingauthor{Song Liu}{song.liu@bristol.ac.uk}

  \icmlkeywords{Flow Matching, Self-supervised Learning, Generative Modelling}

  \vskip 0.3in
]



\printAffiliationsAndNotice{\icmlEqualContribution}


\begin{abstract}

Flow-based methods have achieved significant success in various generative modeling tasks, capturing nuanced details within complex data distributions. However, few existing works have exploited this unique capability to resolve fine-grained structural details beyond generation tasks. This paper presents a flow-inspired framework for representation learning. First, we demonstrate that a rectified flow trained using independent coupling is zero everywhere at $t=0.5$ if and only if the source and target distributions are identical. We term this property the \emph{zero-flow criterion}. Second, we show that this criterion can certify conditional independence, thereby extracting \emph{sufficient information} from the data. Third, we translate this criterion into a tractable, simulation-free loss function that enables learning amortized Markov blankets in graphical models and latent representations in self-supervised learning tasks. Experiments on both simulated and real-world datasets demonstrate the effectiveness of our approach.
The code reproducing our experiments can be found at: \url{https://github.com/probabilityFLOW/zfe}.
\end{abstract}

\section{Introduction}
In recent years, continuous-time flow methods, such as diffusion models \citep{song2019generative,ho2020denoising,song2020denoising,song2021scorebased,karras2022edm} and flow matching \citep{lipmanflow,liuflow}, have been successfully applied to a broad range of generative modeling tasks, including image synthesis \citep{dhariwal2021diffusion,rombach2022high}, time-series forecasting \citep{tashiro2021csdi}, and simulation-based inference \citep{geffner23a,wildberger2023flow,sharrock2024sequential}. These methods learn a time-dependent velocity field that defines a flow, progressively transporting samples from a simple source distribution toward a complex target distribution over a continuous time interval. The widespread success of these approaches demonstrates their versatility across diverse datasets and their capability to capture the nuanced structural details of complex distributions.

Given the success of flow-based methods, it is natural to wonder whether they can be extended to other unsupervised tasks that often require flexible models to capture the subtle details of complex data distributions. 


Recent works have sought to extend flow-based methods beyond generative tasks. For example, \citeauthor{li2025whytamingflowmatching,BeiFar_Reflect_MICCAI2025} transport a data distribution to a Gaussian distribution for outlier detection. \citeauthor{he2025conditionalunconditionalindependencetesting} also transports a conditional distribution to an unconditional Gaussian distribution and performs conditional independence tests. 
Examples also arise in reinforcement learning, where \citeauthor{park2025flow} uses flow-matching policies to parametrize action distributions for Q-learning. Applying flow matching to non-generative tasks is a rapidly growing research area. 

\begin{figure}[t]
    \centering
    \includegraphics[width=1\linewidth]{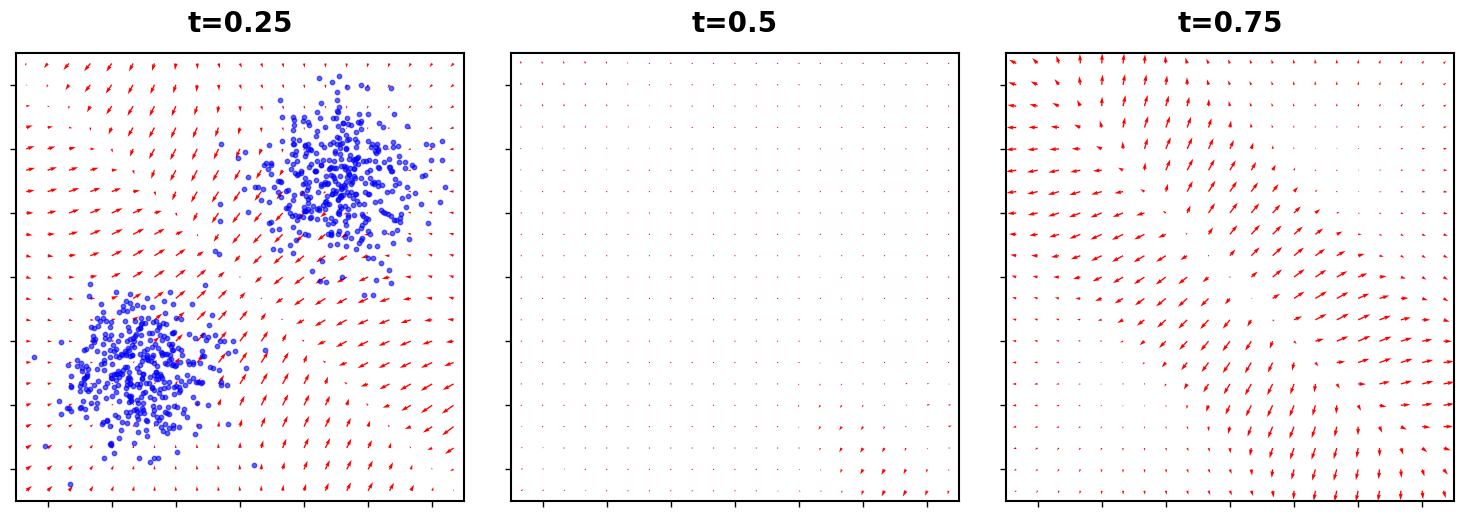}
    \caption{``Zero-flow'' phenomenon. Rectified flow trained on identical distributions (a Gaussian mixture, blue points) with independent coupling. At $t=0.5$, the flow becomes stationary almost everywhere. An explanation of this phenomenon can be found in Section \ref{sec.zeroflow}, and additional visualizations can be found in Figure \ref{fig:app_random}.}
    \label{fig:zero-flow-phenomenon}
\end{figure}
In this paper, we address a classic unsupervised learning task: extracting patterns and representations from data \citep{radford2021learning,uelwer2023survey}. 
We propose a flow model-inspired framework for these problems. 
First, we observe a phenomenon which we term \textit{zero-flow}: a rectified flow with independent coupling vanishes at the midpoint ($t=0.5$) if and only if the source and target distributions are identical (see Figure \ref{fig:zero-flow-phenomenon} and Appendix \ref{app:illu_antisymm} for a comprehensive elucidation). 
Second, we demonstrate that this property serves as a valid test for the equality of two conditional distributions. 
Third, we develop this criterion into a tractable least-squares loss function that learns a \emph{sufficient encoding} by enforcing conditional independence. 
 
We consider two applications of the proposed \emph{zero-flow encoder}: Learning amortized Markov blankets and self-supervised representation learning. 
Given a set of target features, identifying its Markov blanket amounts to selecting a small subset of the remaining features conditioned on which, the other features are independent of the target features. 
The learned subsets often offer new insights into the dataset. 
Many existing methods assume a parametric density model for the data distribution and estimate it using a lasso-based subset selector \cite{meinshausen2006high,banerjee2008model,nguyen2026large}. We demonstrate that our zero-flow encoder efficiently identifies Markov blankets in a non-parametric manner. Moreover, by employing an amortized encoding strategy, our method supports arbitrary target features, enabling flexible inference for target features not encountered during training.

Self-supervised representation learning aims to extract key global information without using any labels. Instead, they rely on creating supervisory information from the dataset itself by learning invariant or consistent information across different variations of the same data. 
The essence of these tasks can again be boiled down to enforcing conditional independence among different transformations (views) of the same data, given the global, semantic information. 
We show that this information can be accurately extracted using our zero-flow encoder. 
Existing methods for learning representations have achieved strong empirical performance by optimizing surrogate criteria 
such as maximizing the mutual information or pointwise mutual information between views \citep{bachman2019learning,chen2020simple,uesaka2025weighted}. 
However, these objectives suffer from ``taking shortcuts'', due to the greediness of mutual information maximization. 
They are easily affected by random, superficial patterns (such as a watermark), fails to identify high-level semantic features.  
We summarize our contributions. 
\begin{itemize}
    \item We identify the zero-flow condition in rectified flow, and mathematically prove that it holds if and only if the source and target distributions are perfectly aligned, in both unconditional and conditional cases. 
    \item We utilize this condition to learn an encoder that extracts \emph{sufficient information} for predicting target features, \emph{without any explicit parametric assumption}. 
    \item We leverage this criterion for learning Markov blankets in a graphical model, where the encoder selects a subset of features to predict target variables, offering insights into a high-dimensional dataset. 
    \item We utilize this criterion for self-supervised learning, and show the learned encoder does not suffer from the ``shortcut problem''. 
\end{itemize}

\section{Background}

We use capital letters, such as $X$ and $Y$, to represent random variables and lowercase boldface letters, such as $\bx$ and $\by$, to represent vectors. Given a binary mask vector $\bm \in \{0, 1\}^d$, and $X \in \mathbb{R}^d$, $X_\bm$ stands for the sub-vector of $X$ whose elements corresponds to ones in $\bm$. $X_{-\bm}$ represents the rest of the vector. The gradient of a function $f(\bx)$ is $\nabla f(\bx) := [\partial_{x_1} f(\bx), \dots, \partial_{x_d} f(\bx)]^\top$. $p_X(\bx)$ is a density function of $X$ evaluated at a point $\bx$. $\|\cdot\|$ represents $\mathcal{L}_2$ norm. $\circ$ denotes the elementwise product. 

In this paper, we aim to design a flow-inspired criterion for extracting summary information from the dataset. For this reason, we briefly review a few related research areas. 

\subsection{Flow-based Generative Models}
Our methodology is inspired by flow-based generative models.
Given empirical observations from two distributions $X \sim p_X$ and $X' \sim p_{X'}$ on $\mathbb{R}^d$, the objective of flow-based generative model is to learn a deterministic transport map $T:\mathbb{R}^d \to \mathbb{R}^d$ such that the pushforward of the source distribution matches the target distribution: if $Z \sim p_X$, then $Z' := T(Z)$ satisfies $Z' \sim p_{X'}$. 

Earlier works, such as continuous normalizing flows (CNFs) \citep{tabak2010density,chen2018neural}, parameterize $T$ as the solution to an Ordinary Differential Equation (ODE) of the form $\mathrm{d}Z(t) = \bv_t(Z(t))\,\mathrm{d}t$, where the velocity field $\bv_t$ is defined by a time-dependent neural network. More recent generalizations have extended this concept to SDE \citep{sohl2015deep,song2019generative,ho2020denoising,song2021scorebased}, in which neural networks approximate the drift term and numerical solvers are employed for inference-time simulation.

Rectified Flow \citep{liuflow} learns a deterministic transport map from paired samples $(X, X')$.
By constructing an interpolation path $X_t = tX' + (1-t)X$, the method learns a velocity field $\bv_t$ by minimizing the squared error against the straight-line direction $(X' - X)$.
Initially, $X$ and $X'$ are drawn independently. However, Rectified Flow introduces a recursive  Reflow procedure to straighten ODE trajectories. 
In this paper, when we refer to Rectified Flow, we only mean the initial flow training with independent pairs $(X, X')$. We do not consider Reflow in our algorithm.  


\subsection{Markov Blankets and Undirected Graphical Models}
\label{sec.markov.blankets}
One way to summarize our dataset is to identify a small subset of features that are sufficient to predict the targeting features. 
In this section, we briefly review such sufficiency through the lens of undirected graphical models and Markov blankets. 

An undirected graphical model is defined as an undirected graph $G = (E, V)$, where $V \in \{1, \dots, d\}$ are indices of a multivariate random variable $Z$. $G$ encodes the conditional independence of $Z$. See, e.g., Section 4 in \citet{koller2009probabilistic} for more details. 


Given a binary partition of the graph, indicated by a mask $\bm \in \{0, 1\}^d$. 
The Markov blanket of the target feature $Z_\bm$, denoted by $\mathcal{M}(Z_\bm)$, is defined as the minimal subset $\mathcal{M}(Z_\bm)\subseteq Z_{-\bm}$ such that
\begin{align*}
Z_\bm \indep Z_{-\bm} \mid \mathcal{M}(Z_\bm),
\end{align*}
which is equivalent to the identity \[p_{Z_\bm\mid Z_{-\bm}} = p_{Z_\bm\mid \mathcal{M}(Z_\bm)}.\] 
In general terms, $\mathcal{M}(Z_\bm)$ is a minimal subset of $Z_{-\bm}$ that is required to achieve the best prediction of $Z_\bm$. 

Identifying the Markov blankets for target features provides a summary of the dataset and has been the key principle for learning sparse graphical models \cite{meinshausen2006high,ravikumar2010high,yang2012graphicalmodels,yang2015graphical,meng2021isingmodelselection}. For a single variable $Z_i$, $\mathcal{M}(Z_i)$ is essentially its neighbours in $G$. By rotating the choice of $Z_i$, we can identify all edges in the graph. In other applications, the choice of the target features $Z_\bm$ may depend on downstream applications. For example, clinicians may be interested in identifying genes associated with a set of symptoms, which may be unknown during training. 


In this work, we propose a flow-inspired, non-parametric, amortized Markov blanket selector that enables inference for the target features specified at inference time.

\subsection{Neural Encoders}
In addition to selecting a minimum subset of features, one can summarize a dataset by training a neural network to extract key information. 

Let $f:\mathcal{Z}\to\mathcal{T}$ be a neural encoder producing a representation $T=f(Z)$. Normally, the dimension of $T$ is significantly smaller than that of $Z$. 

Earlier works, such as VAEs \cite{kingma2013auto} 
learn an encoding function $f$ through maximizing reconstruction fidelity and imposing regularization. 
In recent years, contrastive methods, such as SimCLR \citep{chen2020simple}, define the learning criterion based on representation similarity. Given an anchor sample $Z$, and its ``two views'', $Z_1$ and $Z_2$, the loss
\[
\mathcal{L}_{\mathrm{SimCLR}}
= -\mathbb{E}\!\left[
\log \frac{\exp(\langle f(Z_1), f(Z_2)\rangle / \tau)}
{\sum_{Z' \neq Z_1 }  \exp(\langle f(Z_1), f(Z')\rangle / \tau)}
\right],
\]
measures similarity between representations under a temperature parameter $\tau>0$, enforcing invariance across views of the same sample and discarding information not shared across them. Here, $Z'$ are views of samples in the same batch.  
More recently, masked autoencoders (MAEs) \citep{he2022masked} learn representations by randomly masking part of the input and reconstructing the missing components. Given a random mask $\bm$, let $(Z_{\bm}, Z_{-\bm})$ denote the observed and masked parts of $Z$. With encoder $f$ and decoder $g$, the objective is
\begin{equation}
\mathcal{L}_{\mathrm{MAE}} = \mathbb{E}\big[\|Z_{-\bm} - g(f(Z_{\bm}))\|^2\big].
\end{equation}
In this paper, we propose a flow-inspired encoding method that ensures conditional sufficiency. Our method enforces an equality between two conditional distributions through the zero-flow criterion introduced in the next section.

\section{Zero-flow Criterion and Conditional Independence}
\label{sec.zeroflow}

First, let us consider the general problem of learning sufficient information from a redundant feature set $Y$ to predict a target multi-dimensional variable $X$. Suppose $f$ is an ``encoder'', and $f(Y)$ contains the summary information of $Y$. 
The \emph{sufficiency} of $f(Y)$ is certified using the following conditional independence
\begin{align}
\label{eq.suff}
    X \indep Y | f(Y) 
\end{align}
or equivalently, 
\begin{align}
\label{eq.suff2}
    p_{X|Y} = p_{X|f(Y)}. 
\end{align}
The independence criterion above underpins the theoretical foundation of sufficient dimensionality reduction methods \cite{globerson2003sufficient,suzuki2010sufficient,chen2024deep}. These methods enforce conditions \eqref{eq.suff} or \eqref{eq.suff2} by either solving a multivariate regression problem or maximizing a dependency criterion between $X$ and $f(Y)$. 

In this section, we show that the independence \eqref{eq.suff} and the equality \eqref{eq.suff2} are equivalent to a criterion, which we call the ``zero-flow condition''.   

\subsection{Zero-flow}
\label{sec.zero.flow.uncon}
Let $\bv_t, t \in [0, 1]$ be a velocity field that transports a source distribution $p_X$ to a target distribution $p_{X'}$, meaning the following ODE
\begin{align}
\label{eq.ode}
    \mathrm{d}X(t) = \bv_t (X(t)) \mathrm{d}t 
\end{align}
with the initial condition $X(0) = X \sim p_{X}$, has a solution $X(1) \sim p_{X'}$. 
\eqref{eq.ode} is the key to many generative models. There exist infinitely many $\bv_t$ that transport from $p_{X}$ to $p_{X'}$; much effort has been devoted to designing efficient flows \citep{klein2023equivariant,xie2024reflected,geng2025mean}. 

Rectified flow learns $\bv_t$ through the following objective function
\begin{align}
\label{eq.rf}
    \bv_t := \arg\min_{\bu_t} \int_0^1 \E\left[\|X' - X - \bu_t(X_t)\|^2 \right]\mathrm{d}t,
\end{align}
where $X_t = tX + (1-t)X'$. The population optimal to the above objective is $\bv_t(\bz) = \E[X' - X| X_t = \bz ]$. 
One curious question is that, if $p_X = p_{X'}$, is the velocity field zero everywhere? As we have seen in Figure \ref{fig:zero-flow-phenomenon}, this is generally not the case. However, there is indeed a special property of the velocity field if $p_X = p_{X'}$. 



\begin{theorem}[Zero-flow Condition]
    \label{eq.thm.zeroflow}
    If $X$ is independent of $X'$, $\bv_{t = 0.5}(\bz) = \bzero, \forall \bz$ if and only if $p_X = p_{X'}$. 
\end{theorem}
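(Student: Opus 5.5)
At $t=0.5$ we have $X_{0.5}=(X+X')/2$, so the population minimiser satisfies $\bv_{0.5}(\bz)=\E[X'-X\mid X+X'=2\bz]$. The plan is to prove both directions from this formula: the forward direction by an exchangeability argument, and the reverse direction by turning $\bv_{0.5}\equiv\bzero$ into an identity between characteristic functions.

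\emph{($\Leftarrow$).} Suppose $p_X=p_{X'}$ and $X\indep X'$. Then $(X,X')\overset{d}{=}(X',X)$. The sum $S=X+X'$ is invariant under the swap, while $D=X'-X$ changes sign. Hence $(S,D)\overset{d}{=}(S,-D)$, so $\E[D\mid S]=-\E[D\mid S]$ almost surely. This gives $\bv_{0.5}=\bzero$ on the support of $X_{0.5}$. Off the support the conditional expectation is arbitrary, so I will read ``$\forall\bz$'' as $p_{X_{0.5}}$-a.e., or take the canonical zero extension there.

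\emph{($\Rightarrow$).} Assume $\E[X'-X\mid S]=\bzero$ a.s. Then $\E[(X'-X)\,g(S)]=\bzero$ for every bounded measurable $g$. I will take $g(S)=e^{i\langle\boldsymbol\omega,S\rangle}$ for each $\boldsymbol\omega\in\mathbb{R}^d$. Independence factorises the expectations. Writing $\varphi,\varphi'$ for the characteristic functions of $X,X'$, and assuming finite first moments (needed for the rectified-flow objective anyway), we get
\begin{align*}
\E[X' e^{i\langle\boldsymbol\omega,X'\rangle}]\,\varphi(\boldsymbol\omega)=\E[X e^{i\langle\boldsymbol\omega,X\rangle}]\,\varphi'(\boldsymbol\omega),
\end{align*}
that is, $\varphi(\boldsymbol\omega)\nabla\varphi'(\boldsymbol\omega)=\varphi'(\boldsymbol\omega)\nabla\varphi(\boldsymbol\omega)$. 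Where $\varphi\varphi'\neq0$, this says $\nabla(\varphi'/\varphi)=\bzero$. Since both functions equal $1$ at the origin, $\varphi'=\varphi$ on the connected component of $\{\varphi\neq0,\ \varphi'\neq0\}$ that contains $\bzero$.

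\emph{Main obstacle: zeros of the characteristic functions.} Characteristic functions can vanish, so the quotient argument alone does not give $\varphi=\varphi'$ everywhere, and I do not expect a fully general fix. I will first try to extend the identity $\varphi\nabla\varphi'=\varphi'\nabla\varphi$ along rays $\boldsymbol\omega=s\mathbf{u}$, treating it as a one-dimensional linear ODE in $s$ and applying a uniqueness argument.

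If that fails, I will state the theorem under a mild regularity assumption, for example that $\varphi$ or $\varphi'$ has no real zeros. Such an assumption is standard, e.g.\ in deconvolution, and holds in the Gaussian-smoothed settings used in practice. Under it, the connected-component argument covers all of $\mathbb{R}^d$, giving $\varphi=\varphi'$ everywhere and hence $p_X=p_{X'}$ by Lévy's uniqueness theorem.
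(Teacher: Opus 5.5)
Your proposal is correct and follows essentially the same route as the paper. The forward direction uses exchangeability of the i.i.d.\ pair. For the converse you test $\E[X'-X\mid S]=\bzero$ against $e^{i\langle \omega, S\rangle}$, factorise by independence, and conclude $\varphi=\varphi'$ under a non-vanishing characteristic-function hypothesis, which is exactly the assumption the paper places in Lemma~\ref{lem.vecfield.peqq}; your quotient argument is, if anything, cleaner than the paper's complex logarithm.

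Do not spend effort on the ray/ODE extension, because no assumption-free fix exists. In $d=1$, take $\varphi'=\psi$, a smooth characteristic function supported in $[-1,1]$ of a density $p$ with all moments, and $\varphi=\psi+\tfrac12[\psi(\cdot-3)+\psi(\cdot+3)]$, the characteristic function of $p(x)(1+\cos 3x)$. These satisfy $\varphi\nabla\varphi'=\varphi'\nabla\varphi$ everywhere, hence $\bv_{0.5}=\bzero$, although $\varphi\neq\varphi'$, so the extra hypothesis is genuinely needed.
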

The proof can be found in Section \ref{sec.proof.zero.flow}. 
This condition certifies the \emph{perfect alignment} of $p_X$ and $p_{X'}$ and is central to our encoder methodology. 
In fact, Theorem \ref{eq.thm.zeroflow} is a special case of the following antisymmetry. 
\begin{theorem}(Antisymmetry)
\label{thm.antisymmetry}
    If $X$ is independent of $X'$, 
    $\bv_{t}(\bz) = -\bv_{1-t}(\bz), \forall \bz,\forall t\in [0, 1]$, if and only if $p_X = p_{X'}$.  
\end{theorem}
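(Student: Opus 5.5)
The plan is to treat the two directions separately, working throughout with the population optimum $\bv_t(\bz)=\E[X'-X\mid X_t=\bz]$, where $X_t=tX+(1-t)X'$. The claimed identities are understood for $p_{X_t}$-almost every $\bz$, since that is where the conditional expectation is defined. The ``only if'' direction will reduce to the single time $t=0.5$, which is exactly Theorem \ref{eq.thm.zeroflow}. So both theorems follow from the same argument.

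\textbf{($\Leftarrow$) Exchangeability.} Suppose $p_X=p_{X'}$. Since $X\indep X'$, the pair $(X,X')$ has the same joint law as $(X',X)$. Swapping the two variables sends $X_t=tX+(1-t)X'$ to $tX'+(1-t)X=X_{1-t}$ and sends $X'-X$ to $-(X'-X)$. Equality in joint law of $(X_t,\,X'-X)$ and $(X_{1-t},\,-(X'-X))$ then gives $\E[X'-X\mid X_t=\bz]=-\E[X'-X\mid X_{1-t}=\bz]$, i.e.\ $\bv_t=-\bv_{1-t}$. At $t=0.5$ this forces $\bv_{0.5}=-\bv_{0.5}$, hence $\bv_{0.5}=\bzero$.

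\textbf{($\Rightarrow$) Characteristic functions.} If antisymmetry holds for all $t$, then taking $t=0.5$ gives $\bv_{0.5}\equiv\bzero$, so it suffices to show that $\bv_{0.5}\equiv \bzero$ implies $p_X=p_{X'}$.
\begin{itemize}
\item Write $S=X+X'$ and $D=X'-X$. Since $X_{0.5}=S/2$, the hypothesis says $\E[D\mid S]=0$ almost surely.
\item Hence $\E[D\,e^{i\langle \bu, S\rangle}]=\bzero$ for every $\bu\in\mathbb{R}^d$. This step assumes finite first moments, which are needed anyway for rectified flow to be defined.
\item Let $\varphi$ and $\psi$ be the characteristic functions of $X$ and $X'$. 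By independence, $\E[X'e^{i\langle\bu,X'\rangle}]=-i\nabla\psi(\bu)$, and the expectation factorises. The condition above therefore becomes
\[
\varphi(\bu)\nabla\psi(\bu)=\psi(\bu)\nabla\varphi(\bu)\quad\forall \bu .
\]
\item On the connected component $U$ of $\{\varphi\neq0\}\cap\{\psi\neq 0\}$ containing $\bzero$, this reads $\nabla(\psi/\varphi)=\bzero$. Together with $\psi(\bzero)=\varphi(\bzero)=1$, this gives $\psi=\varphi$ on $U$.
\end{itemize}

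\textbf{Main obstacle: extending past zeros.} The hard step is to extend $\psi=\varphi$ from $U$ to all of $\mathbb{R}^d$. Along any ray $s\mapsto s\bu$, the Wronskian-type identity $g k'-k g'=0$ with $g(s)=\varphi(s\bu)$ and $k(s)=\psi(s\bu)$ only shows the following. At the first zero $s^*$ of $g$ we have $k(s^*)=g(s^*)=0$ by continuity, and after such a common zero the ratio could, in principle, jump to a different constant. I would first try the following routes.
\begin{itemize}
\item Check whether, on a neighbourhood of $s^*$ where only one function vanishes at an isolated point, the differential identity forces continuation of the ratio.
\item Otherwise, impose a mild regularity assumption under which $\varphi$ and $\psi$ are real-analytic, e.g.\ finite exponential moments. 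Then equality on the open set $U$ propagates to all of $\mathbb{R}^d$ by the identity theorem.
\item Alternatively, assume the characteristic functions are zero-free, as holds for Gaussian-smoothed or infinitely divisible laws, in which case $U=\mathbb{R}^d$.
\end{itemize}
In any of these cases we conclude $\varphi=\psi$, hence $p_X=p_{X'}$ by Lévy's uniqueness theorem. This completes both Theorem \ref{thm.antisymmetry} and, as a special case, Theorem \ref{eq.thm.zeroflow}.
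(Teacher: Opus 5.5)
Your proposal follows the paper's proof. For ($\Leftarrow$) you use exchangeability of the i.i.d.\ pair, and for ($\Rightarrow$) you reduce to $t=0.5$ and use the characteristic-function identity $\varphi\nabla\psi=\psi\nabla\varphi$, where the paper (Lemma~\ref{lem.vecfield.peqq}) assumes non-vanishing characteristic functions; that is your third route, so the zero-crossing obstacle you flag is exactly what this hypothesis removes, and your ratio argument $\nabla(\psi/\varphi)=\bzero$ is cleaner than the paper's complex logarithm.

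Your first route cannot succeed without such a hypothesis. Take a $C^1$ characteristic function $\varphi$ supported in $[-1,1]$ whose density decays fast enough, e.g.\ the normalized self-convolution of a triangle function. Its $2$-periodic extension $\psi$ is then the characteristic function of a lattice law with finite mean, and $\varphi\psi'=\psi\varphi'$ holds on all of $\mathbb{R}$, yet $\varphi\neq\psi$.
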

This result is proven in Section \ref{sec.more.res}. 
In this paper, we only use Theorem \ref{eq.thm.zeroflow} to develop the encoder. The application of Theorem \ref{thm.antisymmetry} would be an interesting future work. 

Recall that enforcing the conditional independence means aligning the \emph{conditional distributions} in \eqref{eq.suff2}. Although Theorem \ref{eq.thm.zeroflow} can test the equality of distributions, it cannot be directly used for testing the equality of two conditional distributions, as the velocity field is only trained on marginal distributions $p_X$. 

In the next section, we propose a modified rectified flow whose optimal velocity field defines an ODE-based transport map between conditional distributions $p_{X|Y}$ and $p_{X|f(Y)}$, and prove that it also has a zero-flow condition.

\subsection{Zero-flow Condition and Sufficient Information}
Consider a $\bv_t$ obtained from the following objective function: 
\begin{align}
\label{eq.cond.rf}
    \bv_t := \arg\min_{\bu_t} \int_0^1 \E\left[\|X' - X - \bu_t(X_t, f(Y'), Y)\|^2 \right]\mathrm{d}t,
\end{align}
where $X_t$ is the same as in \eqref{eq.rf} and $(X', Y')$ is an independent copy of $(X, Y)$. 


Note that the above objective differs from existing conditional flows for sampling from a conditional distribution. The data do \textbf{not} come from different distributions (i.e., noise and target). Both samples $(X', Y')$ and $(X, Y)$ are from \emph{the same distribution}. 

We can see that the optimizer of \eqref{eq.cond.rf} has a closed form
\begin{equation}
\label{eq.optimal.vel}
    \bv_t(X_t; \eta, \xi):=\mathbb{E}\!\left[X'-X \,\middle|\, X_t,\ f(Y') = \eta, \ Y=\xi,\right]. 
\end{equation}
We prove that the optimal velocity field \eqref{eq.optimal.vel} can indeed transport the distribution from $p_{X|Y}$ to $p_{X|f(Y)}$. 
\begin{theorem}[Conditional Transport]
    \label{eq.thm.cond.transport}
    Let $X(0)=X \sim p_{X\mid Y}$, and let
    $X(t)$ solve the ODE
    \begin{equation*}
    \mathrm{d}X(t)=\bv_t(X(t);f(Y),Y)\,\mathrm{d}t,\qquad t\in[0,1],
    \end{equation*}
    where $\bv_t$ is defined in \eqref{eq.optimal.vel}, then
    $X(1) \sim P_{X|f(Y)}$. 
\end{theorem}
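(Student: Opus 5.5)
Fix $Y=\xi$ and set $\eta=f(\xi)$; the goal is to show that the ODE driven by $\bv_t(\cdot;\eta,\xi)$ pushes $p_{X\mid Y=\xi}$ to $p_{X\mid f(Y)=\eta}$. The plan is to reduce this to the standard marginal-preservation property of rectified flow by conditioning. In \eqref{eq.cond.rf}, $(X,Y)$ and $(X',Y')$ are independent copies. So once we condition on the event $\{Y=\xi,\ f(Y')=\eta\}$, the pair $(X,X')$ is conditionally independent with $X\mid\cdot\sim p_{X\mid Y=\xi}$ and $X'\mid\cdot\sim p_{X\mid f(Y)=\eta}$. The last identification holds because conditioning $(X',Y')$ on $f(Y')=\eta$ gives the law of $X'$ as $p_{X\mid f(Y)=\eta}$, and independence from $(X,Y)$ means conditioning on $Y=\xi$ does not change it. One point of interpolation convention: the paper writes $X_t=tX+(1-t)X'$ while the regression target is $X'-X$. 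I will adopt $X_t=(1-t)X+tX'$, the convention consistent with transport from $X$ at $t=0$ to $X'$ at $t=1$.

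Under this conditional law, \eqref{eq.optimal.vel} is exactly the rectified-flow velocity $\E[X'-X\mid X_t]$ for the independent coupling of the source $p_{X\mid Y=\xi}$ and the target $p_{X\mid f(Y)=\eta}$. I will then prove the standard continuity-equation argument. Let $\rho_t$ be the law of $X_t$ conditional on $(\xi,\eta)$. For any test function $\varphi\in C_c^\infty$,
\[
\frac{\mathrm d}{\mathrm dt}\E[\varphi(X_t)] = \E[\nabla\varphi(X_t)^\top(X'-X)] = \E[\nabla\varphi(X_t)^\top \bv_t(X_t;\eta,\xi)],
\]
where the second equality uses the tower property. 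Hence $\rho_t$ solves $\partial_t\rho_t+\nabla\cdot(\rho_t\bv_t)=0$ weakly, with $\rho_0=p_{X\mid Y=\xi}$ and $\rho_1=p_{X\mid f(Y)=\eta}$.

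Next, let $\mu_t$ be the law of the ODE solution $X(t)$ started from $X(0)\sim p_{X\mid Y=\xi}$. Then $\mu_t$ solves the same continuity equation with the same initial condition. By uniqueness of solutions to the continuity equation, $\mu_t=\rho_t$ for all $t$, and in particular $X(1)\sim p_{X\mid f(Y)=\eta}=p_{X\mid f(Y)}$ evaluated at $f(Y)=f(\xi)$. Since $\xi$ was arbitrary, the statement follows conditionally on $Y$, and hence for the random $Y$ as well.

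The main obstacle is the uniqueness step. It requires regularity of $\bv_t$, for example local Lipschitz continuity with suitable integrability, so that the ODE is well-posed and the continuity equation has a unique solution (Ambrosio--DiPerna--Lions type results). Rather than proving this regularity, I plan to impose it as a standing assumption, matching the regularity implicitly assumed when the ODE in the theorem is said to have a solution. A secondary technical point is making the conditioning on the possibly measure-zero event $\{f(Y')=\eta, Y=\xi\}$ rigorous. I will handle this with regular conditional distributions, using the product structure of the joint law of $(X,Y)$ and $(X',Y')$.
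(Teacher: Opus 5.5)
Your proposal is correct and follows essentially the same route as the paper. You condition on $\{Y=\xi,\ f(Y')=f(\xi)\}$ to reduce to an independent-coupling rectified flow from $p_{X\mid Y=\xi}$ to $p_{X\mid f(Y)=f(\xi)}$, derive the weak continuity equation via test functions and the tower property, and conclude from uniqueness of the continuity equation; the paper likewise assumes this uniqueness explicitly in its restated theorem, and your interpolation $(1-t)X+tX'$ is the convention its appendix proof actually uses.
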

The proof can be found in Section \ref{app:proof3.1}.  
The zero-flow condition for $\bv_t(\cdot; \eta, \xi)$ is stated as follows. 

\begin{theorem}[Conditional Zero-flow Condition]
    \label{eq.thm.zero.cond.flow} For all pairs $(\xi, \eta)$
    such that $f(\xi) = \eta$, and for all $\bz$, 
    the velocity field $\bv_{t = 0.5}(\bz\ ; \eta, \xi) = \bzero$ if and only if $p_{X|Y} = p_{X|f(Y)}$.
\end{theorem}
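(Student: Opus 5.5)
Our plan is to reduce Theorem \ref{eq.thm.zero.cond.flow} to the unconditional zero-flow criterion, Theorem \ref{eq.thm.zeroflow}, applied pointwise in the conditioning variables. Fix a pair $(\xi,\eta)$ with $f(\xi)=\eta$. Since $(X',Y')$ is an independent copy of $(X,Y)$, conditioning on $\{Y=\xi,\ f(Y')=\eta\}$ leaves $X$ and $X'$ conditionally independent, with
\begin{align*}
X \mid \{Y=\xi, f(Y')=\eta\} &\sim p_{X\mid Y=\xi},\\
X' \mid \{Y=\xi, f(Y')=\eta\} &\sim p_{X\mid f(Y)=\eta}.
\end{align*}
So under this conditional law, \eqref{eq.optimal.vel} is exactly the population rectified-flow velocity of \eqref{eq.rf} for the independent coupling of the source $p_{X\mid Y=\xi}$ and the target $p_{X\mid f(Y)=\eta}$. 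We will state this identification precisely via the joint density
\[
p_{X\mid Y}(\bx\mid\xi)\,p_{X\mid f(Y)}(\bx'\mid\eta),
\]
and write $\bv_t(\cdot;\eta,\xi)$ as a ratio of integrals over $\{tX'+(1-t)X=\bz\}$, mirroring the unconditional formula.

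Given this, Theorem \ref{eq.thm.zeroflow} applied to each fixed pair yields the pointwise equivalence
\[
\bv_{0.5}(\cdot;\eta,\xi)\equiv\bzero \iff p_{X\mid Y=\xi}=p_{X\mid f(Y)=f(\xi)}.
\]
Quantifying over all pairs with $f(\xi)=\eta$, the left side holding for every such pair is equivalent to $p_{X\mid Y=\xi}=p_{X\mid f(Y)=f(\xi)}$ for every $\xi$, which is the statement $p_{X\mid Y}=p_{X\mid f(Y)}$. Both directions follow at once. The restriction $f(\xi)=\eta$ is essential: for $f(\xi)\neq\eta$ the two conditionals need not agree even under sufficiency, so zero flow is neither expected nor required there.

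The main obstacle is measure-theoretic. Conditional distributions are defined only up to null sets of $\xi$ and $\eta$, and the velocity is a conditional expectation defined only $\bz$-almost everywhere. We will adopt the paper's implicit convention: work with densities and regular versions, interpret ``for all'' as holding for the chosen versions (or almost everywhere), and note that the equality $p_{X\mid Y}=p_{X\mid f(Y)}$ is likewise almost-sure in $Y$. The other point to check is that Theorem \ref{eq.thm.zeroflow} needs only the independence of source and target and no further structure, so that it transfers verbatim to the conditional laws. We will verify this by rereading its proof: at $t=0.5$ the midpoint $(X+X')/2$ is symmetric under swapping $X$ and $X'$ exactly when the two laws coincide, and that argument uses only independence and the two marginal densities.
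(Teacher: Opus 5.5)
Your reduction is the same as the paper's. You fix $(\xi,\eta)$ with $f(\xi)=\eta$, note that $U:=X\mid\{Y=\xi\}$ and $V:=X'\mid\{f(Y')=\eta\}$ are independent with laws $p_{X\mid Y=\xi}$ and $p_{X\mid f(Y)=\eta}$, and read $\bv_{0.5}(\cdot;\eta,\xi)$ as the unconditional midpoint velocity $\E[V-U\mid U+V=\cdot\,]$. The forward direction is then the exchangeability argument, and the converse is the unconditional converse.

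\textbf{The gap: the converse is not a symmetry argument.} You claim the unconditional criterion needs only independence and that its proof is a symmetry argument; both claims are wrong. Exchangeability of $(U,V)$, which is what equal laws give you, yields $\E[V-U\mid U+V]=\bzero$. That proves only the direction $p_U=p_V\Rightarrow$ zero flow. A vanishing conditional mean is much weaker than exchangeability. In the paper the converse is Lemma \ref{lem.vecfield.peqq}. Testing against $e^{i\langle s,U+V\rangle}$ and using independence gives $\phi_U\nabla\phi_V=\phi_V\nabla\phi_U$, hence $\nabla(\phi_V/\phi_U)=\bzero$ on $\{\phi_U\neq 0\}$. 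This forces $\phi_U=\phi_V$ only on the connected component of that set containing the origin. That is why the Lemma assumes non-vanishing characteristic functions, and the hypothesis cannot be dropped.

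\textbf{Counterexample.} Take $d=1$ and let $\mu$ have a smooth, rapidly decaying density $p$ whose characteristic function $\phi_\mu$ is smooth and supported in $[-1,1]$ (e.g.\ $p\propto|\hat g|^2$ for a smooth $g$ supported in $[-\tfrac12,\tfrac12]$). Let $\nu$ have density $p(x)(1+\cos ax)$ with $a>2$. Then $\phi_\nu=\phi_\mu+\tfrac12\phi_\mu(\cdot-a)+\tfrac12\phi_\mu(\cdot+a)$ agrees with $\phi_\mu$ on $[-1,1]$. Outside that interval $\phi_\mu=\phi_\mu'=0$. Hence $\phi_\mu\phi_\nu'=\phi_\nu\phi_\mu'$ on all of $\mathbb{R}$, so $\E[(V-U)e^{is(U+V)}]=0$ for every $s$, i.e.\ $\E[V-U\mid U+V]=0$, although $\mu\neq\nu$.

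This transfers to the conditional statement. Take $Y\in\{1,2\}$ with $X\mid Y=1\sim\mu$, $X\mid Y=2\sim\nu$, and $f$ constant. The same computation works for the pairs $(\mu,\tfrac12(\mu+\nu))$ and $(\nu,\tfrac12(\mu+\nu))$; on $[a-1,a+1]$ the two characteristic functions in the second pair are proportional. So the midpoint flow vanishes for every $\xi$, while $p_{X\mid Y}\neq p_{X\mid f(Y)}$.

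The verification you plan in your last paragraph would therefore fail. The argument goes through only if you add the hypothesis that, for each relevant pair, the characteristic function of $p_{X\mid Y=\xi}$ (or of $p_{X\mid f(Y)=\eta}$) never vanishes, together with integrability of $X$. The paper's proof imports exactly this assumption, implicitly, through Lemma \ref{lem.vecfield.peqq}.
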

The proof can be found in Section \ref{proof.zero.cond.flow}. 

\begin{figure}[t]
    \centering
    \includegraphics[width=1\linewidth]{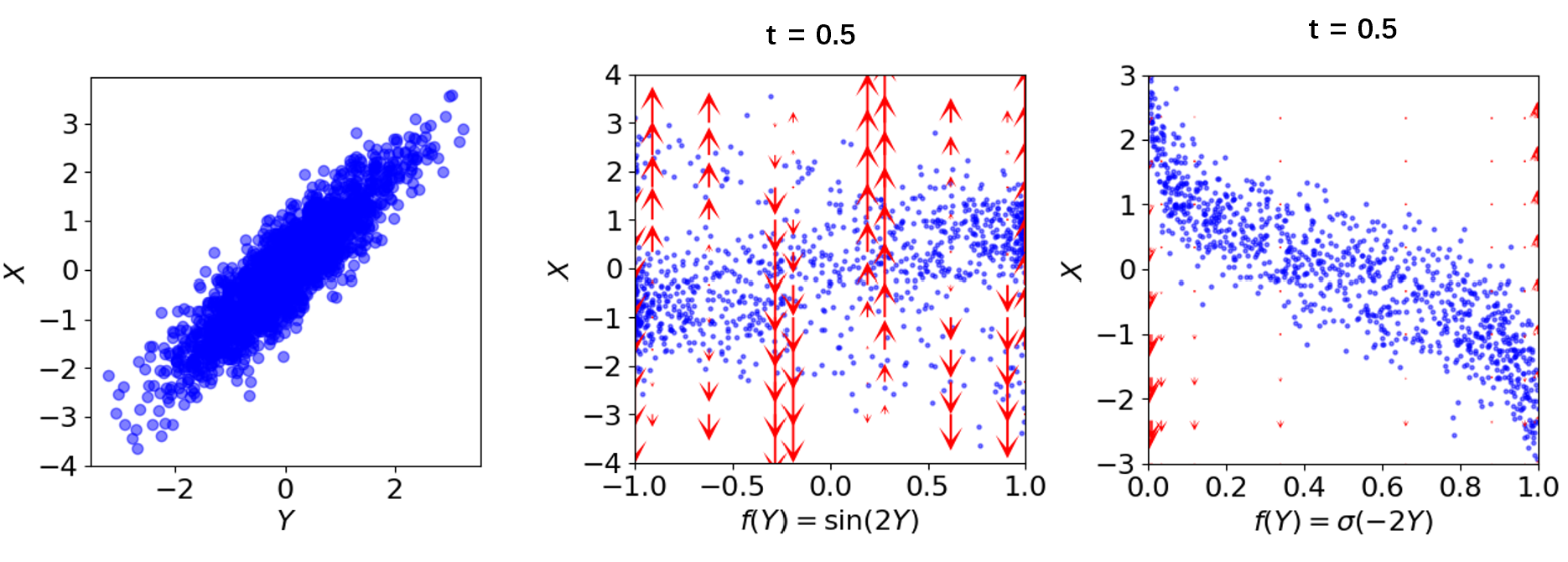}
    \caption{\textbf{Left}: $X = 0.5Y + \mathcal{N}(0,1)$. \textbf{Right}:
    $f(Y) = \sigma(-2Y)$ is a sufficient statistic for predicting $X$, thus $\bv_{t=0.5} = 0$ almost everywhere as predicted by Theorem \ref{eq.thm.zero.cond.flow}. \textbf{Center}: $f(Y) = \sin(2Y)$ is \emph{not} a sufficient statistic for predicting $X$, thus $\bv_{t=0.5} \neq 0 $, as predicted by Theorem \ref{eq.thm.zero.cond.flow}. The vector fields are trained using a two-layer ReLU network with an empirical version of  \eqref{eq.cond.rf}. }
    \label{fig:zeroflow.cond}
\end{figure}
Crucially, Theorem \ref{eq.thm.zero.cond.flow} implies that $\bv_{t = 0.5}(\cdot\ ; f(Y), Y)  = \bzero$ is synonymous with the equality \eqref{eq.suff2} which defines the sufficiency of $f$ for predicting $X$. 
Therefore, it establishes the velocity field $\bv_{t = 0.5}$ as a rigorous mathematical tool for testing the sufficiency of $f$.
We show a visualization in Figure \ref{fig:zeroflow.cond}.

\textbf{The idea behind \emph{the zero-flow encoder} }
is to learn a sufficient representation $f(Y)$ by minimizing the norm of velocity field $\bv_{t = 0.5}$ while restricting $f$ in some family (such as functions with sparse outputs). 

In the following section, we turn this intuition into a concrete, tractable loss for enforcing the equality in \eqref{eq.suff2}. 


\subsection{Enforcing Conditional Independence with Zero-flow Loss}
\label{sec.enforce.cond.zero.flow}

We propose the following least-squares minimization problem as the zero-flow representation learning criterion: 
\begin{align}
\label{eq.minflow}
    \min_f \E\|\bv_{t=0.5}( X_t \ ; Y, f(Y))\|^2. 
\end{align}

Note that directly imposing the zero-flow condition is computationally intractable as we need to enforce it over all $\bz$. Here, we heuristically replace it with $X_t$, as it works well across all experiments. 
Combining \eqref{eq.cond.rf} and \eqref{eq.minflow}, we obtain the following simulation-free least-squares objective: 
\begin{align}
\label{eq.main.obj}
     \min_{\bu, f\in \mathcal{F}} L(\bu, f) = \underbrace{\int_0^1 \omega(t) \E\left[\|\bu_{t}(X_t, f(Y), Y)\|^2\right]\mathrm{d}t}_{\text{zero-flow criterion}} \notag \\ + \underbrace{\int_0^1 \E\left[\|X' - X - \bu_t(X_t, f(Y'), Y)\|^2 \right]\mathrm{d}t}_{\text{rectified flow loss}},  
\end{align}
where $\omega(t) \geq 0$ is a time weighting function that peaks at $t = 0.5$ (e.g. a Laplace distribution centred at 0.5). It also balances the rectified loss and the zero-flow loss. $\mathcal{F}$ is the chosen family for our encoder. Some more concrete examples will be introduced in the following sections. 
The expectations in \eqref{eq.main.obj} can be approximated via samples.
In practice, we approximate the IID samples of $(X', Y')$ by drawing bootstrap samples from $(X, Y)$ with replacement. 

We refer to \eqref{eq.main.obj} as the \emph{zero-flow loss} and the learned encoder $f$ as the \emph{zero-flow encoder}. The zero-flow representation learning provides a general recipe for many sufficient information learning problems. Now, we introduce two applications of our zero-flow encoder. 

\section{Application: Learning Amortized Markov Blankets}
\label{sec.app.markov.blanket}
As we discussed in Section \ref{sec.markov.blankets}, the Markov blanket is a succinct representation of a high-dimensional, complex graphical model.
In this section, we propose a non-parametric estimation method for Markov blankets using our zero-flow encoder. 
\subsection{Learning Markov Blanket with Zero-flow Encoder}
 
Suppose $Z = [Z_1, \dots Z_d]$ is a $d$-dimensional undirected graphical model and $\bm \in \{0, 1\}^d$ is a mask that partitions the random variables in $Z$ into two groups $[Z_\bm, Z_{-\bm}]$. 

We now demonstrate the identification of a Markov blanket via a zero-flow encoder.
We formulate the encoder loss by substituting $X = Z_\bm$, $Y = Z_{-\bm}$ in the objective \eqref{eq.main.obj}. 
The encoder is parameterized as $f_\bw(Y) = Y \circ \sigma(\bw)$, where $\sigma$ is sigmoid function and  $\bw\in \mathbb{R}^d$ is the parameter. In this formulation, $\sigma(\bw)$ acts as a ``gating function'' that effectively performs feature selection on $Y$. 

Theorem \ref{eq.thm.zero.cond.flow} ensures that the optimal subset $f_\bw(Z_{-\bm})$ selected by minimizing this loss satisfies the conditional independence condition 
\begin{align*}
    Z_{\bm} \indep Z_{-\bm} \mid f_\bw(Z_{-\bm}). 
\end{align*}

To identify the Markov blanket, we need to ensure the selected subset is minimum, which can be achieved by regularizing the sparsity of gates $\sigma(\bw)$ in addition to the zero-flow loss: 
\begin{align*}
    \min_{\bu, \bw \in \mathbb{R}^d} L(\bu, f_\bw, \bm) + \lambda \sum_{i=1}^d \sigma_i(\bw). 
\end{align*}

Our method does not rely on any parametric assumptions about the underlying distribution of $Z$, and thus does not need to handle the troublesome normalisation term typically associated with the parametric graphical model estimation \cite{wainwright2008graphical}. 

\subsection{Amortized Markov Blanket Encoder}

In existing literature, subset selection is typically performed offline by learning $\mathcal{M}(Z_\bm)$ for a fixed partition $\bm$. 
However, the partition $\bm$ may change at inference time. 
For instance, rather than analyzing standard rectangular patches, clinicians may want to manually select a specific region of interest within a medical image to inspect its dependencies. 
Given the computational constraints of large datasets, training a selector on demand is infeasible. 
In high-dimensional settings, the number of possible partitions is combinatorial, making it impossible to pre-train a model for every potential configuration of $\bm$.

Moreover, it makes sense to share information when learning $\mathcal{M}(Z_\bm)$ for different $\bm$. Suppose $Z$ is a time series. $Z_\bm$ and $Z_{\bm'}$ are two adjacent time windows. The Markov blanket of $Z_\bm$ may have a similar structure to that of $Z_{\bm'}$. 

To this end, we can further generalize the encoder model $f$ to incorporate the information from the mask $\bm$, thereby creating an amortized Markov blanket encoder. Specifically, we parametrize an \emph{amortized encoder} $f_\beta(\by, \bm) = \by \circ \sigma_\beta( \bm)$, where $\sigma_\beta(\bm)$ is a gating network. This design allows us to incorporate inductive bias based on the type of data we are dealing with. For example, if $Z$ is a time series, we can employ an LSTM as the gating network to ensure the temporal smoothness.  
We slightly modify the loss to be 
\begin{align}
\label{eq.zero-flow.gm}
    \min_{\bu, \beta} \E_M\left[ L(\bu, f_\beta, M) + \lambda \sum_{i=1}^d [\sigma_\beta(M)]_i\right], 
\end{align}
where $M$ is a random draw of masks, which can be simulated from a Bernoulli distribution or a strategy that is more fitting to the downstream applications. 
In our experiments, we construct the target set $Z_M$ by simulating potential variables of interest. Specifically, given a time series $Z = [Z_1, \dots, Z_T]$, we assume the user seeks the Markov blanket for a sequence of $k$ consecutive variables. We therefore define $Z_M$ as a randomly selected time window $Z_{[i : i+k]} = [Z_i, Z_{i+1}, \dots, Z_{i+k-1}]$, where the starting index $i$ is drawn uniformly from $\{1, \dots, T-k+1\}$.

Note that it is not straightforward to perform a similar amortized learning when training a parametric model for $p_Z$ using Maximum Likelihood Estimation or Score Matching. For each partition $Z = [Z_\bm, Z_{-\bm}]$, we need to specify and train a different model $p_{Z,\bm}$. The number of models to train can be prohibitively large, unless we adopt some simplifications (such as a pairwise graphical model). Our method does not require such an assumption. 


\section{Application: Self-supervised Learning}
\begin{figure}[t]
    \centering
\includegraphics[width=0.99\linewidth]{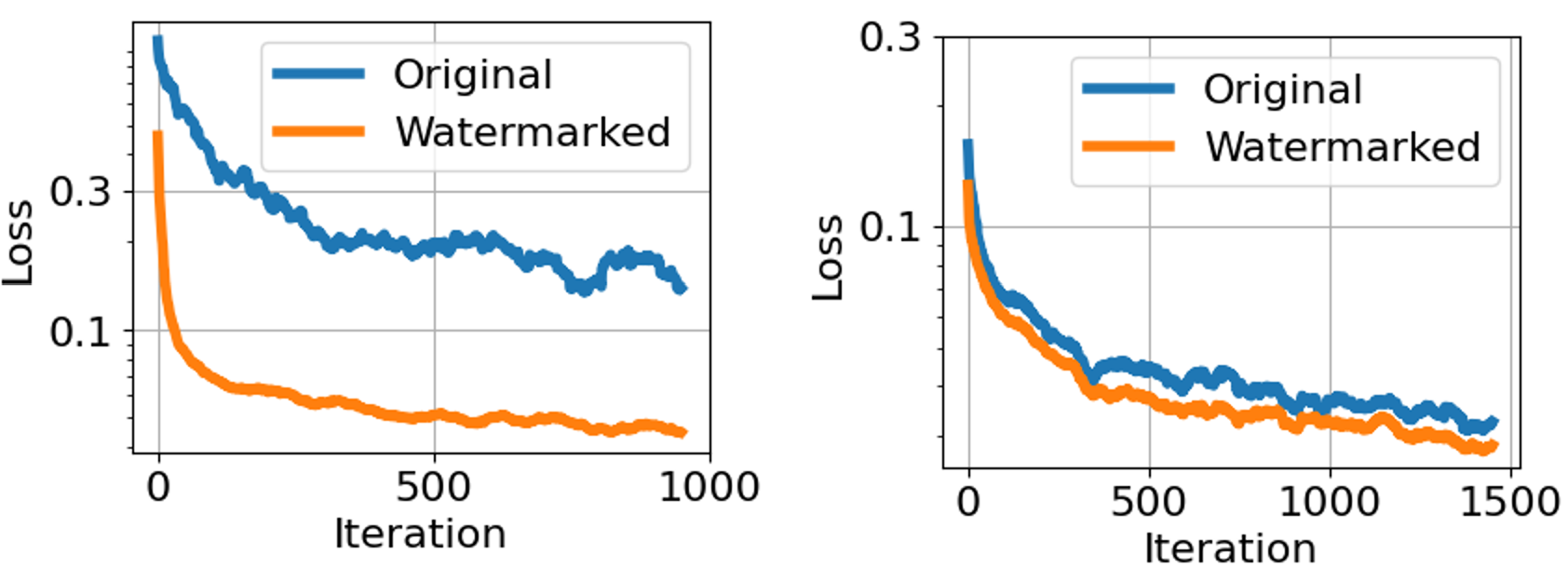}
    \caption{\textbf{Left}: Saturated SimCLR loss. On the watermarked CIFAR10 dataset, the loss quickly collapses to small values due to shortcuts. \textbf{Right}: The loss of the proposed method. The loss remains steady before and after the watermark is introduced. }
    \label{fig:loss.shortcuts}
\end{figure}
Self-supervised learning \cite{liu2021self,xu2024energy} learns a latent representation of the dataset for downstream tasks. 
Unlike classic unsupervised methods, self-supervised learning focuses on learning semantic information from the dataset without relying on any handcrafted probabilistic models. Instead, it relies on assumptions about how different transformations (such as corruptions and image rotations) preserve global semantic information.     

We rephrase a popular assumption and explain how the zero-flow encoder can exploit it to learn a representation. 

\subsection{Multi-view Assumption and Contrastive Learning}
\label{sec.multi-view}
Let $Z_1$ and $Z_2$ be two augmented views of the same image, and $T$ be a latent, shared information.  
The multi-view assumption \cite{Tian2020Contrastive} refers to the following data-generating scheme: 
\begin{align}
\label{eq.simclr}
    Z_2 \leftarrow T \rightarrow Z_1.
\end{align}
This assumption has been recognized as the foundation of the popular self-supervised learning algorithm 
SimCLR \cite{chen2020simple}, 
which learns an encoding function $f$ that maps $Z_1$ and $Z_2$ back to their common cause $T$. 
This is achieved by maximizing the Mutual Information $\mathrm{MI}(f(Z_1), f(Z_2))$ or other affinity metrics (such as cosine similarity).  
Due to the data processing inequality and the conditional independence implied by \eqref{eq.simclr}, we can see that 
\begin{align}
    \mathrm{MI}(f(Z_1), f(Z_2)) \le \mathrm{MI}(Z_1, Z_2) \le \mathrm{MI}(Z_1, T).
\end{align}
This means that SimCLR is maximizing a lower bound on the shared information between the view and the global information $T$ under the conditional independence encoded in the graphical model \eqref{eq.simclr}. See e.g., \citet{zhang2023deep,chen2023understanding} for more discussions. 

\subsection{Shortcut Problem}
It is well-known that SimCLR suffers from the ``shortcut problem'' due to the \emph{greediness of mutual information maximization} \cite{poolewhat2020,robinson2021can}. The contrastive loss can be easily saturated by learning superficial shared features between $Z_1$ and $Z_2$ (a.k.a., shortcuts), causing the algorithm to fail to learn more meaningful semantic features (e.g., ``dogs'' or ``horses''). 
We demonstrate such a case in Figure~\ref{fig:loss.shortcuts}, where we train SimCLR on CIFAR-10. For each image, we add a instance-specific watermark patch in the top-left corner, which acts as a shortcut feature shared between views. 
It can be seen that the SimCLR loss quickly drops to zero
once such a shortcut is learned. 
SimCLR is prone to shortcuts because its contrastive objective is essentially an instance-wise classification loss. 
Once an easy cue is found to separate the positive from the negative pairs, the loss saturates, leaving little incentive to learn additional semantic features. In contrast, without watermark patches, the loss decreases much more slowly, indicating that the model learn non-trivial, high-level patterns. 

Rather than relying on mutual information maximization, we aim to enforce the conditional independence in \eqref{eq.simclr} more directly using the zero-flow criterion. 
Similar to Section \ref{sec.app.markov.blanket}, let us reformulate the loss \eqref{eq.main.obj} by substituting $X$ with $Z_1$ and $Y$ with $Z_2$, assuming that the zero-flow criterion can be minimized to zero, 
 then Theorem \ref{eq.thm.zero.cond.flow} ensures the conditional independence 
 \begin{align}
    \label{eq.rep1}
     Z_1 \indep Z_2 \mid f(Z_2). 
 \end{align}
 Assume that the shared information $T$ is recoverable from $Z_2$, the conditional independence of \eqref{eq.rep1} is exactly the one that is encoded in \eqref{eq.simclr}. 
 Note that, to avoid a trivial solution $f(Z_2) = Z_2$, we need to implement some information bottleneck. In our experiments, $f$ maps $Z_2$ to an $ L$-dimensional latent space, and $L \ll d$. More details about this encoder architecture can be found in Section \ref{sec.reprsentation.encoder}. 
 
 In Section \ref{sec.learning.latent.representation}, we show that, although derived from the same assumption, our zero-flow encoder does not suffer from the shortcut problem. 

\section{Experiments}

\subsection{Learning Markov Blankets}
\label{sec.exp.markov.blanket}
In this suite of experiments, we evaluate the ability of the zero-flow encoder to recover Markov blankets. Detailed experimental settings are provided in Appendix \ref{app:impl_ggm}, with additional results in Appendix \ref{sec.markov.more.results}.

\subsubsection{Learning Structure of Graphical Model}
\begin{figure}[t]
    \centering
    \includegraphics[width=1\linewidth]{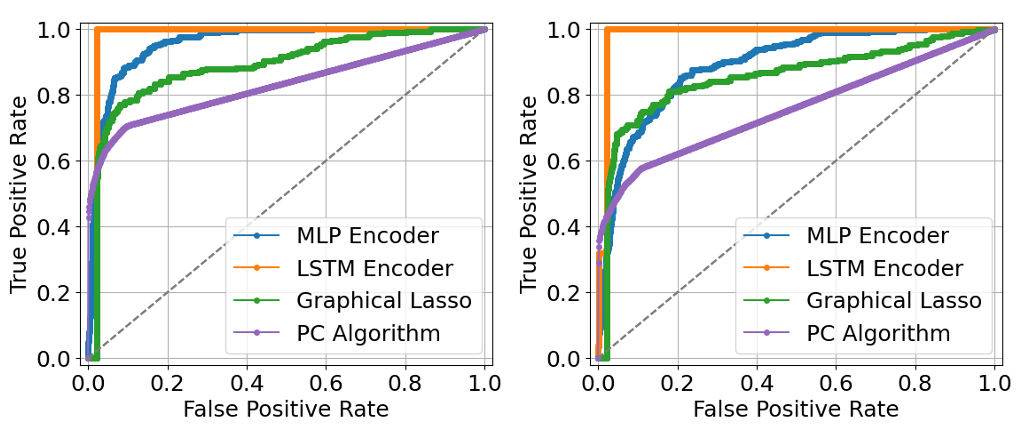}
    \caption{
    The comparison of ROC curves in terms of graphical model structure recovery on truncated-Gaussian (left) and non-paranormal (right)  graphical models. 
    }
    \label{fig:roc}
\end{figure}
\begin{figure}[t]
    \centering
    \includegraphics[width=.85\linewidth]{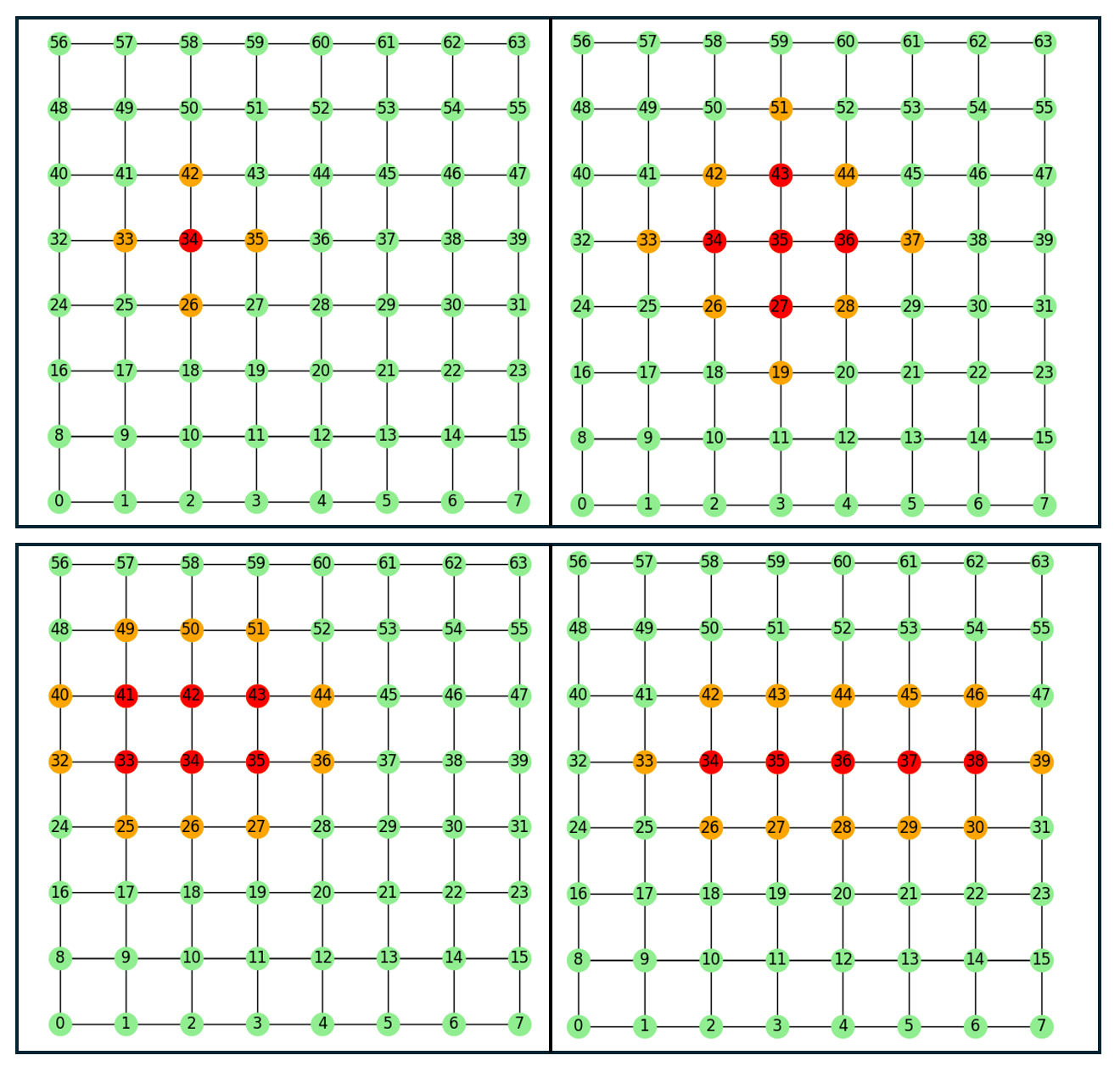}
    \caption{The identified Markov blankets (yellow) of target variables (red). The true graphical model (lattice) is visualized for convenience. The true Markov blanket is the set of neighbouring nodes of the red in the lattice. \textbf{Top}: Target variables observed during training. \textbf{Below}: Targets were unseen during training. }
    \label{fig:lattice.in.out.main.sample}
\end{figure}
We first test our encoder using the task of recovering the structure of sparse \emph{non-Gaussian} graphical models. 

We generate 2048 samples of $\tilde{Z}$ from a normal distribution $\mathcal{N}(\bzero, \Theta^{-1})$, where $\Theta \in \mathbb{R}^{64 \times 64}$ is a sparse precision matrix. $\tilde{Z}$ is a $64$-dimensional undirected graphical model whose structure is determined by the non-zero elements in $\Theta$. We then apply non-linear transforms $Z = T(\tilde{Z})$ to create non-Gaussian graphical models: Nonparanormal and Truncated Gaussian. Each transform preserves the conditional independence structure encoded in $\Theta$. 
The task of graphical model learning is to reconstruct the sparsity pattern of $\Theta$, given samples of $Z$. For the first set of experiments, we set the graphical model to a 3rd-order Markov chain. 

To recover the structure using zero-flow encoder, we first train a Markov blanket encoder $f$ using \eqref{eq.zero-flow.gm}. Then, we create a gate matrix $G$, where $G_{i,j} = [\sigma_{\hat{\beta}}(\bm^{(i)})]_j$ and $\bm^{(i)}$ is an all-zero vector with the $i$-th element set to one. $\sigma_{\hat{\beta}}(\bm^{(i)})$ encodes the Markov blanket of $Z_i$ and 
$G_{i,j} \neq 0$ indicate $Z_j \in \mathcal{M}(Z_i)$. 
If a zero-flow encoder can recover the correct graphical model structure, we should observe that the gate matrix $G$ has the same non-zero pattern as $\Theta$. 

We test two different architecture for the gating network $\sigma_{\hat{\beta}}$: a two-layer MLP and an LSTM. The latter architecture introduces a sequential inductive bias, assuming that the Markov blanket of a variable $Z_i$ is likely to comprise locally adjacent neighbors, such as $Z_{i-1}$ and $Z_{i+1}$.

We compare our method with Graphical Lasso \cite{banerjee2008model}, which learns the precision matrix $\Theta$ by $\ell_1$-regularized maximum likelihood, and with the PC algorithm from \texttt{causal-learn} \citep{zheng2024causal}, a constraint-based method that infers graph structure via Fisher’s $Z$ conditional independence tests. Note that we attempted using the kernel-based non-parametric test criterion \verb|kci| provided by the package. However, it is estimated to run on our dataset for 24 hours for a single seed on CPU. In comparison, our method is also non-parametric, but it trains an amortized MLP encoder in 30 seconds on the same CPU. 

\begin{table}[t]
    \centering
    \caption{AUC Performance in terms of graphical model structure recovery on Gaussian, nonparanormal, and truncated-Gaussian graphical models. Results are averaged over 10 random trials.}
    \label{tab:auc_results}
    
    \resizebox{\columnwidth}{!}{%
        \begin{tabular}{lcccc}
            \toprule
            \textbf{Dataset} & \textbf{MLP (Ours)} & \textbf{LSTM (Ours)} & \textbf{PC-Fisher’s Z} &\textbf{GLasso} \\
            \midrule
            Gaussian & $0.97 \pm 0.00$ & $\mathbf{0.98 \pm 0.00}$ & $0.85 \pm 0.01$& $0.94 \pm 0.01$ \\
            Nonparanormal & $0.79 \pm 0.02$ & $\mathbf{0.97 \pm 0.01}$ &$0.75 \pm 0.01$ & $0.78 \pm 0.02$ \\
            Truncated & $0.95 \pm 0.01$ & $\mathbf{0.98 \pm 0.00}$ &$0.83 \pm 0.01$& $0.88 \pm 0.02$ \\
            \bottomrule
        \end{tabular}%
    }
\end{table}
The ROC curve for recovering the graphical model structure is plotted in Figure \ref{fig:roc}. We repeat the experiments 10 times with different random seeds, and show the mean AUC values in Table \ref{tab:auc_results}. It can be seen that zero-flow encoder outperforms graphical lasso and the PC algorithm in both non-Gaussian settings. Notably, the LSTM-based encoder achieves near-perfect AUC scores by effectively leveraging the appropriate inductive bias. Our further experiments show that CNN based encoder also works well on lattice networks, even when the dimensions are large. See Section \ref{sec.markov.more.results} for details. 

Finally, we evaluate whether the learned encoder can amortize Markov blanket discovery across different target sets on a lattice graphical model. The encoder is trained using randomly sampled target sets consisting of a node and, with probability $50\%$, each of its four lattice neighbours. We identify $Z_j \in \mathcal{M}(Z_{\bm})$ whenever $[\sigma_{\hat{\beta}}(\bm)]_j > 0.1$. At inference time, we evaluate both in-sample targets observed during training and out-of-sample targets that were never used for training. As shown in Figure~\ref{fig:lattice.in.out.main.sample}, the zero-flow encoder correctly recovers the Markov blankets in both cases, demonstrating that the learned amortized encoder generalizes to unseen target variables.

\subsubsection{Markov Blankets on S\&P500 in 2019-2020}

\begin{figure}
    \centering
    \includegraphics[width=.7\linewidth]{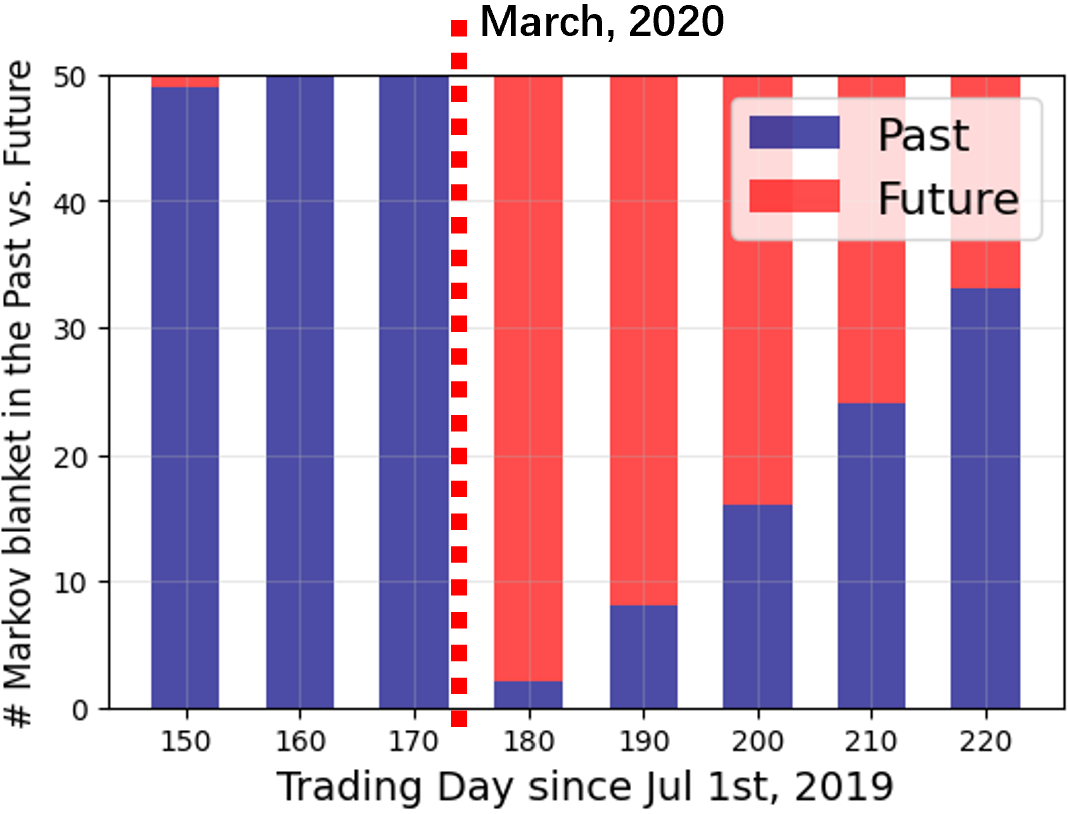}
    \caption{The sudden change of Markov blanket patterns, before and after March 2020. Before, Markov blankets consisted of past trading days. Immediately after, Markov blankets predominantly consisted of future trading days. }
    \label{fig:market}
\end{figure}

The Markov blanket is often used to interpret data. Here, we apply our method to S\&P 500 data to detect changes in time series. We obtain S\&P 500 stock prices for the period from \emph{July 1st, 2019}, to \emph{June 30th, 2020}. We treat each stock as a sample and each day as a feature. Thus, the dataset consists of 500 samples and 252 features. 

We run \eqref{eq.zero-flow.gm} to learn the amortized Markov blanket of $Z_\bm$ for any five consecutive trading days during this period. Since each trading day is a feature, the Markov blanket consists of trading days that are most predictive of these five-day windows.  

Under the assumption that the market is stable, the past should be the most predictive of its present. However, this assumption can be violated when the market is disrupted by major ongoing events, making the past less predictive. This can be detected from a shifting pattern in Markov blankets. As the market is being disrupted, the number of past days in the Markov blanket reduces significantly, and future days will dominate. In fact, measuring the predictiveness of the past samples (using autoregressive models) is a classic change-point detection strategy \cite{takeuchi2006} and
identifying changes in graphical models  to detect changes in the dataset has also been well-established \cite{liu2017support,kim2021twosample}. 

In Figure \ref{fig:market}, we plot the percentage of past and future days in the Markov blankets learned by zero-flow encoder. Here, we select the Markov blanket as the features corresponding to the top-50 largest gating values, $\sigma_{\hat{\beta}}(\bm)$. 
We can see the number of past days in the Markov blanket drops sharply after March, 2020, signalling a suddenly disrupted market following the COVID-19 pandemic outbreak in the US.

\begin{figure*}
    \centering
    \includegraphics[width=0.99\linewidth]{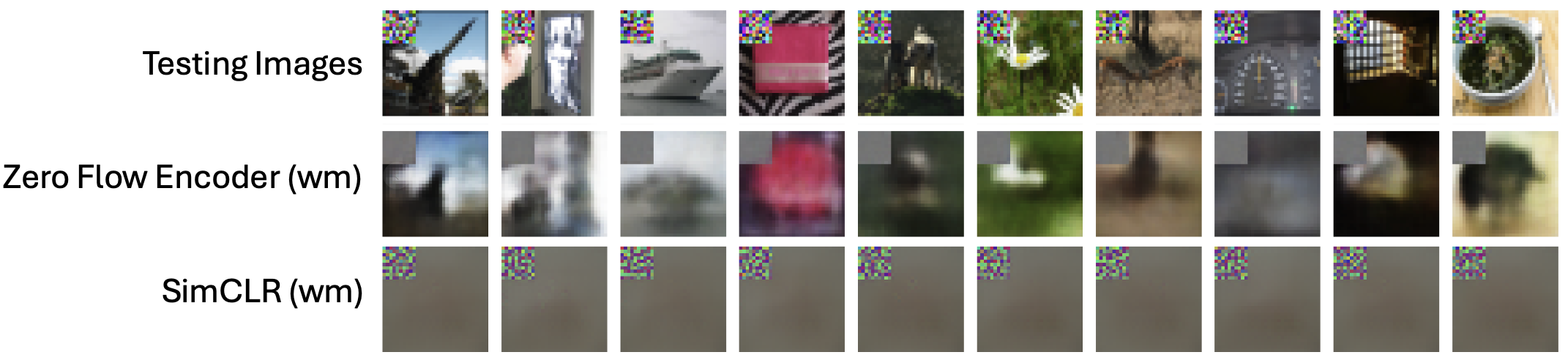}
    \caption{Reconstruction of watermarked ImageNet-1K images. The top row shows the original testing images with random watermark patches in the top-left corners. The second and the third row show images reconstructed from representations extracted using our encoder and SimCLR, respectively. 
    }
    \label{fig:recon}
\end{figure*}
\subsection{No Shortcuts!}
\label{sec.learning.latent.representation}

In this experiment, we evaluate the zero-flow encoder on a range of image datasets and compare it with SimCLR. Our goal is not to outperform SimCLR in standard state-of-the-art settings, but rather to demonstrate that the zero-flow encoder is less vulnerable to the shortcut problem, despite relying on the same multi-view assumption discussed in  Section \ref{sec.multi-view}.

For a fair comparison, we use \emph{the same encoder architecture} (3 CNN layers and an MLP projection head) for both methods. Details of these experiments are in Section \ref{sec.imp.rep.learning}. 

First, we evaluate the performance on Fashion-MNIST \citep{xiao2017fashion} and CIFAR-10 \citep{krizhevsky2009learning} datasets with artificial shortcuts.  
We construct artificial shortcuts by adding meaningless instance-specific watermark information. 
For Fashion-MNIST, we assign each image a random color. For CIFAR-10, we add a small random $10 \times 10$ patch to the top-left corner of each image. These shortcuts provide an easy cue for contrastive learning, allowing SimCLR to distinguish positive pairs from negatives without learning semantic image features.

As shown in Figure~\ref{fig:loss.shortcuts}, SimCLR suffers from this shortcut issue: once the watermark is introduced, its loss rapidly saturates. In contrast, the zero-flow loss remains stable before and after introducing the watermark. Figure~\ref{fig:encoded.rep} shows that, on Fashion-MNIST, the zero-flow encoder continues to encode clothing shapes even when color shortcuts are present. This suggests that the shortcuts do not dominate the learned representation. 

Now we move on to larger scale datasets (STL-10 \citep{coates2011analysis}, TinyImageNet \citep{le2015tiny}, ImageNet-1K \citep{russakovsky2015imagenet}) and evaluate the learned representations using downstream applications. Similar to CIFAR10 dataset, we preprocess the datasets by adding a random patch to the top left corner of the images, covering 1/9 area of the image, as the shortcut.  We consider two applications: linear probing and image reconstruction. 

We report linear probing accuracy on the test set in Table~\ref{tab:comparison_cifar}, together with reconstruction MSE. We also include the change of performance comparing to the same task on a clean, original dataset. 
We include a Masked Autoencoder (MAE) with a Vision Transformer backbone \citep{dosovitskiy2020image,he2022masked} as an additional strong baseline. 
We can see that the performance degradation of SimCLR is substantial, indicating that the learned representation is dominated by the shortcut. In contrast, our method and MAE are more robust, showing much smaller changes in accuracies and reconstruction errors.


Figure~\ref{fig:recon} provides visual reconstruction examples on unseen watermarked images. Both encoders capture the watermark in the top-left corner, indicating that the shortcut information is encoded. However, SimCLR only reconstructs the watermark and fails to preserve meaningful visual structures of the original image. In contrast, our representation retains richer semantic information beyond the shortcut, consistent with the observation in Figure~\ref{fig:encoded.rep}. 




\begin{figure}
    \centering
    \includegraphics[width=0.95\linewidth]{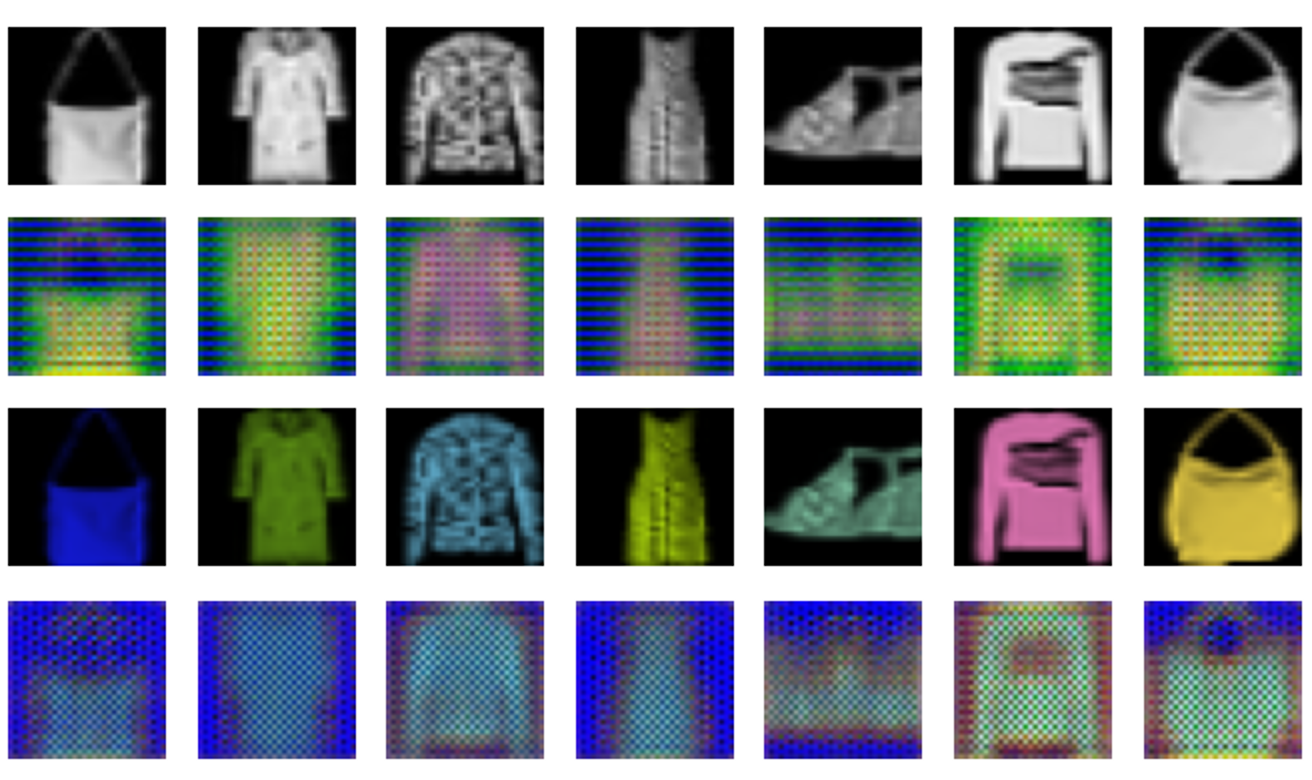}
    \caption{Fashion MNIST with and without random colors. The learned encodings (row 2 and 4) vs. the original images (row 1 and 3). 
    It can be seen that zero-flow encoding continues to learn semantic information (e.g., the shapes of clothes) despite shortcut information (color). }
    \label{fig:encoded.rep}
\end{figure}

\begin{table}[t]
\centering
\caption{Comparison of linear probing accuracy and reconstruction MSE across \textbf{watermarked datasets}. Values in parentheses indicate change in performance compared with the clean, non-watermarked dataset. The best performance is highlighted. }
\label{tab:comparison_cifar}
\small
\setlength{\tabcolsep}{3pt}
\resizebox{\columnwidth}{!}{
\begin{tabular}{llcc}
\toprule
Dataset & Method & Accuracy & Recon. MSE \\
\midrule
STL-10 & Ours   & 52.21\% $(+2.41)$  & 0.0256 $(-0.0003)$ \\
STL-10 & SimCLR & 10.00\% $(-60.23)$ & 0.0674 $(+0.0210)$ \\
STL-10 & MAE    & \textbf{56.14\%} $(+0.65)$ & \textbf{0.0245} $(-0.0001)$ \\
\midrule
TinyImageNet & Ours   & 16.60\% $(+0.50)$  & \textbf{0.0290} $(+0.0005)$ \\
TinyImageNet & SimCLR & 0.50\% $(-30.63)$ & 0.0758 $(+0.0267)$ \\
TinyImageNet & MAE    & \textbf{18.75\%} $(-0.24)$ & 0.0305 $(-0.0007)$ \\
\midrule
ImageNet-1K & Ours   & 7.01\% $(+0.82)$  & \textbf{0.0161} $(-0.0003)$ \\
ImageNet-1K & SimCLR & 0.10\% $(-15.18)$ & 0.0600 $(+0.0361)$ \\
ImageNet-1K & MAE    & \textbf{8.1\%} $(-0.2)$ & 0.0177 $(-0.0025)$ \\
\bottomrule
\end{tabular}
}
\end{table}



\section{Limitations}
In Section \ref{sec.enforce.cond.zero.flow}, we used heuristics to derive zero-flow loss, enforcing $\bv_t = 0$ only at $X_t$. Although it works well in experiments, a more rigorous justification is needed. 
Compared with other encoding techniques, such as VAE, SimCLR, and MAE, the zero-flow encoder requires a velocity field model $\bv_t$ in addition to the encoder, which incurs additional modeling and computational costs. Rectified flow is designed to transport continuous-valued distributions, thus zero-flow encoder is not directly applicable to discrete variables.

\section{Conclusion and Future Works}

In this paper, we introduced zero-flow, a novel criterion inspired by inspecting velocity fields of rectified flow models. We theoretically established that this was both necessary and sufficient for conditional independence. By transforming this theoretical insight into a tractable zero-flow-loss formulation, we enabled the extraction of sufficient information in a non-parametric fashion. We demonstrated the efficacy of this framework in two applications—amortized Markov blanket learning and robust self-supervised learning—where it consistently achieved competitive performance and validated our theoretical claims. 

One promising future direction is to first map the target variable $X$ to a latent space then perform zero-flow encoding in this latent space. This extension allows us to handle high-dimensional or discrete target data.

\section*{Impact Statement}

This paper presents work whose goal is to advance the field of Machine
Learning. There are many potential societal consequences of our work, none
which we feel must be specifically highlighted here.
\section*{Acknowledgements}
We thank Sam Power for helpful discussions. TS was partially supported by JSPS KAKENHI (24K02905) and JST CREST (PMJCR2015). This research is supported by the National Research Foundation, Singapore and the Ministry of Digital Development and Information under the AI Visiting Professorship Programme (award number AIVP-2024-004). Any opinions, findings and conclusions or recommendations expressed in this material are those of the author(s) and do not reflect the views of National Research Foundation, Singapore and the Ministry of Digital Development and Information.
The authors thank three anonymous reviewers for their insightful feedback. 

\bibliography{icml2026}
\bibliographystyle{icml2026}

\clearpage
\onecolumn
\appendix

\section{An Illustrative Example of the Zero-flow Phenomenon}
\label{app:illu_antisymm}
Consider a distribution $p_X$. From this distribution, we construct two independent batches of particles,
$X = \{x_i\}_{i=1}^m$ and $X' = \{x'_i\}_{i=1}^m$, both sampled i.i.d.\ from $p_X$. These batches are paired via \textbf{independent coupling} (Figure~\ref{fig:app_random}(a)). A velocity field model $\bv_t$ is trained using rectified flow objective \eqref{eq.rf}. 

\begin{figure}[h]
    \centering
    \includegraphics[width=0.7\linewidth]{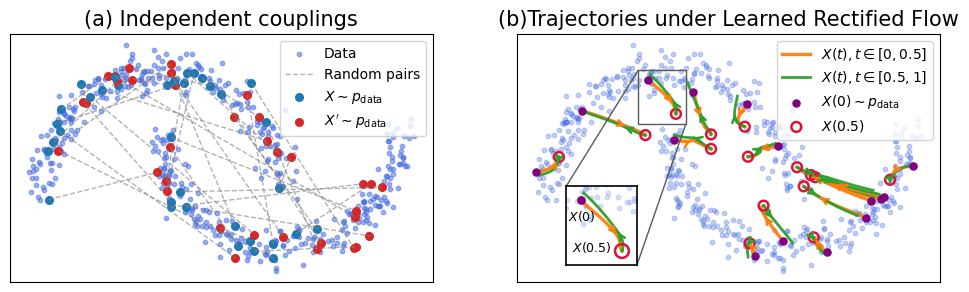} 
    \includegraphics[width=0.85\linewidth]{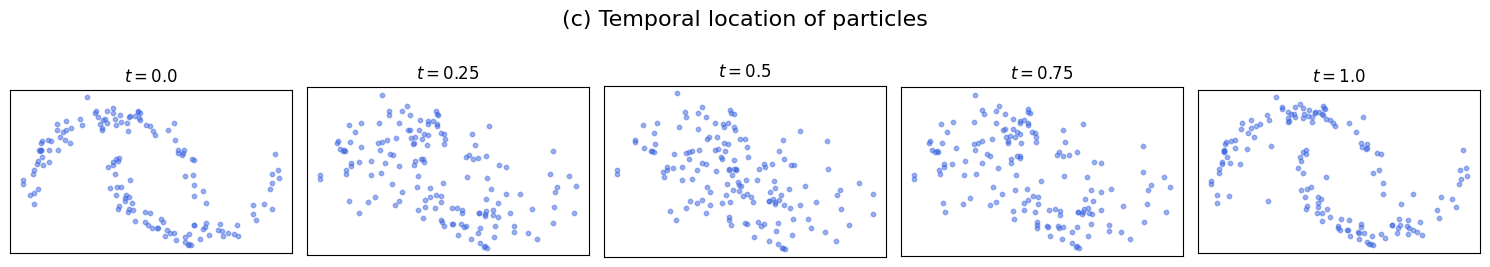} 
    \includegraphics[width=0.85\linewidth]{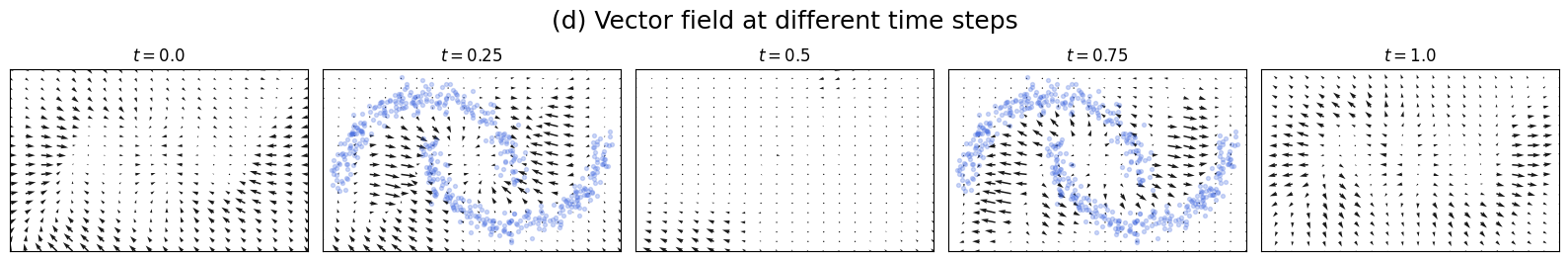}
    \caption{
        \textbf{Demonstration of the zero-flow property.} 
        \textbf{(a)} Independent coupling between $X$ and $X'$. 
        \textbf{(b)} Trajectories $X(t)$. 
        \textbf{(c)} Temporal snapshots of particles. 
        \textbf{(d)} Learned velocity fields $\bv_t$ at various $t$, highlighting the vanishing vector field at $t=0.5$.
    }
    \label{fig:app_random}
\end{figure}

As shown by the orange trajectories in Figure~\ref{fig:app_random}(b), despite the fact that the endpoint marginals satisfy $p_X = p_{X'}$, the flow is \emph{not} zero. This non-static behaviour arises from the stochasticity introduced by independent coupling: individual particles are transported toward an intermediate position before returning to the support of the original position. Figure~\ref{fig:app_random}(c) and (d) visualizes this process. 

Moreover, this convergence-divergence pattern reflects a fundamental \textbf{antisymmetry} in the induced flow. The trained velocity field must satisfy $\bv_{t}(\bz) = -\bv_{1-t}(\bz)$ for all $t \in [0,1]$. This property follows from the symmetry of independent coupling (See Theorem \ref{thm.antisymmetry} for details). As a direct consequence, the expected velocity at the temporal midpoint must vanish, $\bv_{0.5}(\bz) = 0$,
which we refer to as the \emph{zero-flow} condition. Figure~\ref{fig:app_random}(d) provides a direct visualization of this phenomenon, showing that the learned velocity fields are indeed antisymmetric and collapse to near zero magnitude at $t=0.5$.

\section{Proofs and Additional Theorems}

\subsection{Proof of Theorem \ref{eq.thm.zeroflow}}
\label{sec.proof.zero.flow}
\begin{proof}
    First, we prove that if $p_X = p_{X'}$ then $\bv_{t = 0.5}(\bz) = \bzero$. 
    We can see that 
    \begin{align}
    \label{eq.middlepoint.zero}
    \bv_{t = 0.5}(\bz) &= \E[X' - X|\frac{X'}{2} + \frac{X}{2} = \bz ] \notag \\
    & = \E[X' - X|X' + X = 2\bz ]. 
    \end{align}
    Since $X'$ and $X$ are independent and identically distributed, due to symmetry, we have 
    \[\E[X'|X' + X = 2\bz ] = \E[X|X + X' = 2\bz ]\] for all $\bz$. Hence, \eqref{eq.middlepoint.zero} is zero because of the linearity of expectations.  
    The opposite direction holds due to Lemma \ref{lem.vecfield.peqq}. 
\end{proof}

\subsection{Characterization of Identical Distributions via Midpoint Velocity}
\begin{lemma}
\label{lem.vecfield.peqq}
Let $X_0 \sim p_{X_0}$ and $X_1 \sim p_{X_1}$ be two independent random variables in $\mathbb{R}^d$ with non-vanishing characteristic functions. Let $\bv(z, t)$ be the optimal Rectified Flow velocity field defined by the conditional expectation:
\begin{equation}
    \bv(z, t) = \mathbb{E}[X_1 - X_0 \mid (1-t)X_0 + tX_1 = z].
\end{equation}
If $\bv(z, 0.5) = 0$ everywhere, then $p_{X_0} = p_{X_1}$.
\end{lemma}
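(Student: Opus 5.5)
The plan is to turn the pointwise statement $\bv(z,0.5)=0$ into an identity between characteristic functions. Set $S = X_0 + X_1$ and $D = X_1 - X_0$. At $t=0.5$ the conditioning event $(X_0+X_1)/2 = z$ is the same as $S = 2z$. The hypothesis therefore says $\mathbb{E}[D \mid S] = 0$ almost surely, meaning almost everywhere with respect to the law of $S$; this is all the conditional-expectation definition can deliver. By the defining property of conditional expectation, this is equivalent to
\[
\mathbb{E}\bigl[D\, g(S)\bigr] = 0
\]
for every bounded measurable $g$, provided $D$ is integrable. I will assume integrability, since the rectified flow objective requires finite second moments of $X_0$ and $X_1$.

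Apply this with $g(s) = e^{i\langle \omega, s\rangle}$ for each $\omega \in \mathbb{R}^d$, which gives $\mathbb{E}[(X_1 - X_0) e^{i\langle\omega, X_0+X_1\rangle}] = 0$. Write $\varphi_j$ for the characteristic function of $X_j$. By independence,
\[
\mathbb{E}\bigl[X_1 e^{i\langle\omega,X_1\rangle}\bigr]\varphi_0(\omega) = \mathbb{E}\bigl[X_0 e^{i\langle\omega,X_0\rangle}\bigr]\varphi_1(\omega).
\]
Since the first moments are finite, $\nabla\varphi_j(\omega) = i\,\mathbb{E}[X_j e^{i\langle\omega,X_j\rangle}]$. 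The identity therefore becomes
\[
\varphi_0(\omega)\nabla\varphi_1(\omega) = \varphi_1(\omega)\nabla\varphi_0(\omega) \quad \text{for all } \omega.
\]

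Now use the non-vanishing assumption. Because $\varphi_0$ and $\varphi_1$ have no zeros, the ratio $r = \varphi_1/\varphi_0$ is a well-defined $C^1$ function on $\mathbb{R}^d$. Its gradient is
\[
\nabla r = \frac{\varphi_0\nabla\varphi_1 - \varphi_1\nabla\varphi_0}{\varphi_0^2} = 0.
\]
Since $\mathbb{R}^d$ is connected, $r$ is constant. At the origin $r(0) = 1$, so $\varphi_1 \equiv \varphi_0$, and Lévy's uniqueness theorem gives $p_{X_0} = p_{X_1}$. This route avoids complex logarithms entirely, because only the quotient is needed.

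The step I expect to need the most care is the first: passing rigorously from the pointwise statement about $\bv$ to the moment identity, and justifying that the characteristic functions can be differentiated. Both steps rest on the finite-first-moment assumption. If that assumption had to be dropped, I would first try replacing $D$ by a truncated or bounded test version. Alternatively, I would work in the sense of distributions, reading the identity $\varphi_0 \nabla\varphi_1 = \varphi_1\nabla\varphi_0$ as an equality of tempered distributions before dividing. For the paper's setting, however, finite second moments are implicit in \eqref{eq.rf}, so I would simply state that assumption.
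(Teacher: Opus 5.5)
Your proof is correct and follows the paper's argument. You test $\mathbb{E}[X_1 - X_0 \mid S]=0$ against $e^{i\langle\omega,S\rangle}$, factor by independence to get $\varphi_0\nabla\varphi_1=\varphi_1\nabla\varphi_0$, and conclude $\varphi_0\equiv\varphi_1$. Your two departures from the paper, the quotient $\varphi_1/\varphi_0$ with vanishing gradient in place of the complex logarithm and the explicit finite-first-moment assumption, fix points the paper glosses over: the branch of the logarithm and the differentiability of the characteristic functions.
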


\begin{proof}
$\bv(z, 0.5) \equiv 0$ implies that for all $z$:
\begin{equation}
    \mathbb{E}\left[X_1 - X_0 \;\middle|\; \frac{X_0 + X_1}{2} = z\right] = 0.
\end{equation}
This is equivalent to conditioning on $S = X_0 + X_1 = s$:
\begin{equation}
    \mathbb{E}[X_1 - X_0 \mid S = s] = 0. 
\end{equation}
It implies for any measurable function $h$, 
\begin{equation}
    \mathbb{E}[(X_1 - X_0) h(S) ] = 0. 
\end{equation}
Let $h(S) = e^{itS} = e^{it(X_1 + X_0)}$, and 
\begin{align}
    \label{eq.fourier}
    \mathbb{E}[(X_1 - X_0) e^{it(X_1 + X_0)} ]
    =  \mathbb{E}[X_1 e^{it(X_1 + X_0)} ] - \mathbb{E}[X_0 e^{it(X_1 + X_0)} ] = 0. 
\end{align}
Moreover, due to the independence of $X_1$ and $X_0$
\begin{align}
    \mathbb{E}[X_1 e^{it(X_1 + X_0)} ] = \mathbb{E}[X_1 e^{it(X_1)}]\mathbb{E}[ e^{it(X_0)}]. 
\end{align}
Thus \eqref{eq.fourier} can be written as 
\begin{align}
\label{eq.fourier.2}
    \mathbb{E}[X_1 e^{it(X_1)}]\mathbb{E}[ e^{it(X_0)}] = \mathbb{E}[ X_0 e^{it(X_0)}]\mathbb{E}[ e^{it(X_1)}]. 
\end{align}
Notice that $\phi_1(t) = \mathbb{E}[e^{it(X_1)}]$ is the characteristic function of $p_{X_1}$ and $\phi_0(t) = \mathbb{E}[e^{it(X_0)}]$ is the characteristic function of $p_{X_0}$. $\phi'_1(t) = i\mathbb{E}[X_1e^{it(X_1)}]$ and $\phi'_0(t) = i\mathbb{E}[X_0e^{it(X_0)}]$. 
Thus, \eqref{eq.fourier.2} can be expressed as \begin{align}
\label{eq.fourier.3}
    \phi_1(t)\phi'_0(t)/i = \phi_0(t)\phi'_1(t)/i. 
\end{align}
Rewrite \eqref{eq.fourier.3} and integrate on both sides: 
\begin{align}
\label{eq.fourier.4}
    \frac{\phi'_1(t)}{\phi_1(t)} = \frac{\phi'_0(t)}{\phi_0(t)}, \implies \log \phi_1(t) = \log \phi_0(t) + C. 
\end{align}
Since $\phi_0(0) = \phi_1(0) = 1 \implies C = 0$. It further implies $\phi_0(t) = \phi_1(t)$, i.e., $X_0$ and $X_1$ have the same characteristic function. The desired result follows.

\end{proof}

\subsection{Proof of \cref{eq.thm.cond.transport}}
\label{app:proof3.1}

\begin{theorem}[Restatement \cref{eq.thm.cond.transport}]\label{thm:3.1}
Consider the conditional rectified-flow objective
\begin{equation}\label{eq:cond.rf}
\bv_t \in \arg\min_{\bu_t}\int_0^1 
\mathbb{E}\!\left[\big\|X'-X-\bu_t(X_t, Y, f(Y'))\big\|^2\right]\,dt .
\end{equation}
Then for each $t\in[0,1]$ an optimal velocity field admits the population form
\begin{equation}\label{eq:vdagger}
\bv_t(x;y,z')
:=
\mathbb{E}\!\left[X'-X \,\middle|\, X_t=x,\ Y=y,\ f(Y')=z'\right].
\end{equation}

Let $X(0)=X$ with $X\sim p_{X\mid Y=y}$, and let
$X(t)$ solve the ODE
\begin{equation}\label{eq:ode_thm31}
dX(t)=\bv_t(X(t);y,z)\,dt,\qquad t\in[0,1],
\end{equation}
where $z = f(y)$. 

Assume that the continuity equation with drift $x\mapsto\bv_t(x;y, z)$ admits a unique
solution. Then
\begin{equation}
X(1)\sim p_{X\mid f(Y)=z}.
\end{equation}
\end{theorem}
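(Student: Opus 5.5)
The plan is to reduce the statement to the classical marginal-preserving property of rectified flow, applied inside a fixed ``conditional world'' $\{Y=y,\ f(Y')=z\}$. Throughout I use the interpolation $X_t=(1-t)X+tX'$, so that $X_0=X$ and $X_1=X'$. The proof has four steps, carried out in this order.

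\textbf{Step 1 (optimality).} The objective \eqref{eq:cond.rf} decouples over $t$. For each fixed $t$ it is an $L^2$ regression of $X'-X$ on the variables $(X_t,Y,f(Y'))$. The minimiser is therefore the conditional expectation \eqref{eq:vdagger}, defined almost surely. This is the standard projection property of conditional expectation, and it only requires $\mathbb{E}\|X'-X\|^2<\infty$.

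\textbf{Step 2 (structure of the conditional coupling).} I will show that, conditionally on $Y=y$ and $f(Y')=z$ with $z=f(y)$, the pair $(X,X')$ is independent with
\[
X\sim p_{X\mid Y=y},\qquad X'\sim p_{X\mid f(Y)=z}.
\]
Since $(X',Y')$ is an independent copy of $(X,Y)$, the pair $(X,Y)$ is independent of $(X',f(Y'))$. Conditioning on $(Y,f(Y'))=(y,z)$ therefore factorises the joint regular conditional law into $p_{X\mid Y=y}\otimes p_{X'\mid f(Y')=z}$. The second factor equals $p_{X\mid f(Y)=z}$ because $(X',Y')\overset{d}{=}(X,Y)$. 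To make this rigorous I will work with regular conditional distributions, so that conditioning on a possibly null event $\{Y=y\}$ is meaningful for a.e.\ $y$.

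\textbf{Step 3 (continuity equation).} Within this conditional world, let $\rho_t$ denote the law of $X_t$. For any test function $\varphi\in C_c^\infty$ I differentiate under the expectation, which is justified by dominated convergence since $\nabla\varphi$ is bounded and $X'-X$ is integrable:
\[
\frac{d}{dt}\mathbb{E}[\varphi(X_t)\mid y,z]=\mathbb{E}\big[\nabla\varphi(X_t)\cdot(X'-X)\mid y,z\big].
\]
The tower property, conditioning additionally on $X_t$, turns the right-hand side into $\mathbb{E}[\nabla\varphi(X_t)\cdot\bv_t(X_t;y,z)\mid y,z]$. Hence $\rho_t$ solves the weak continuity equation $\partial_t\rho_t+\nabla\cdot(\rho_t\,\bv_t(\cdot;y,z))=0$ with $\rho_0=p_{X\mid Y=y}$.

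\textbf{Step 4 (identification and uniqueness).} The law of the ODE solution \eqref{eq:ode_thm31} started from $p_{X\mid Y=y}$ also solves this continuity equation with the same drift and the same initial condition. By the assumed uniqueness the two curves coincide. In particular $X(1)\sim\rho_1$, and $\rho_1=p_{X\mid f(Y)=z}$ by Step 2.

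\textbf{Main obstacle.} The delicate step is Step 2 together with the tower argument in Step 3. There one must check that the field $\bv_t(\cdot;y,z)$ obtained from the global regression coincides, for a.e.\ $(y,z)$, with the conditional expectation computed in the disintegrated world. This requires choosing versions carefully, and only pairs with $z=f(y)$ matter. Such pairs may form a null set under the product law of $(Y,f(Y'))$, so I would first try to argue through a jointly measurable version of the disintegration, which can be evaluated on the diagonal $\{z=f(y)\}$. If that fails, I would state the result for the version defined by the factorised kernel $p_{X\mid Y=y}\otimes p_{X\mid f(Y)=z}$ directly, which makes the diagonal evaluation meaningful by construction.
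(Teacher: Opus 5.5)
Your proposal is correct and follows the paper's proof essentially step for step: the $L^2$-projection argument for optimality, the factorisation of the conditional coupling into $p_{X\mid Y=y}\otimes p_{X\mid f(Y)=z}$, the weak continuity equation derived from test functions and the tower property, and the identification of the two laws through the assumed uniqueness. Your concern about evaluating $\bv_t(\cdot;y,z)$ on the possibly null diagonal $\{z=f(y)\}$ is a real subtlety that the paper passes over with ``by construction''. Your fallback of defining the field through the factorised kernel is exactly the version the paper implicitly uses.
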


\begin{proof}
For each fixed $t\in[0,1]$, the integrand in \eqref{eq:cond.rf} is a squared loss in the value
$u_t(X_t,Y,f(Y'))$. Hence the population minimizer is the conditional expectation of the target
$X'-X$ given the input $(X_t,Y,f(Y'))$, which yields \eqref{eq:vdagger}.

Set $z=f(y)$ and let
\begin{equation}
X_0\sim p_{X\mid Y=y},
~~~
X_1\sim p_{X'\mid f(Y')=z}. 
\end{equation}
and assume $X_0$ and $X_1$ are independent.
Define the linear interpolant
\begin{equation}
\widetilde X_t:=(1-t)X_0+tX_1,\qquad t\in[0,1].
\end{equation}
Let $\rho_t(\cdot\,;y,z)$ denote the law of $\widetilde X_t$.
For any smooth test function $\varphi:\mathbb{R}^d\to\mathbb{R}$ with compact support, differentiation of the expectation gives
\begin{equation}
\frac{d}{dt}\mathbb{E}\big[\varphi(\widetilde X_t)\big]
=
\mathbb{E}\big[\nabla\varphi(\widetilde X_t)^\top \frac{d}{dt}\widetilde X_t\big]
=
\mathbb{E}\big[\nabla\varphi(\widetilde X_t)^\top (X_1-X_0)\big], 
\end{equation}

Conditioning on $\widetilde X_t$ and applying the tower property yields
\begin{align}
\label{eq.cont.1}
\frac{d}{dt}\int \varphi(x)\,\rho_t(x;y,z)\,dx
&=
\int \nabla\varphi(x)^\top \bb_t(x)\,\rho_t(x;y,z)\,dx \notag \\ 
&=
- \int \varphi(x) \nabla \cdot (\bb_t(x)\,\rho_t(x;y,z))\,dx,
\end{align}
where
$
\bb_t(x):=\mathbb{E}\!\left[X_1-X_0 \,\middle|\, \widetilde X_t=x\right]
$ 
and the second line follows from integration by parts since $\varphi$ has compact support. Moreover, 
\begin{align}
\label{eq.cont.2}
\frac{d}{dt}\int \varphi(x)\,\rho_t(x;y,z)\,dx = \int \varphi(x)\, \frac{d}{dt}\rho_t(x;y,z)\,dx. 
\end{align}

Due to \eqref{eq.cont.1} and \eqref{eq.cont.2}, $\rho_t(\cdot\,;y,z)$ and $\bb_t$ satisfy the continuity equation with the initial condition 
\begin{equation}
\rho_0(\cdot\,;y,z)=p_{X\mid Y=y}.
\end{equation}
Notice that by construction, the conditional distribution of $(X, X')$ given $Y=y, f(Y')=z$ matches that of $(X_0, X_1)$. Thus, 
\[
\bb_t(x) = \mathbb{E}\!\left[X'-X \,\middle|\, X_t=x,\ Y=y,\ f(Y')=z\right] = \bv_t(x;y,z),
\]
which is the velocity field of $X(t)$. 

Let $\nu_t(\cdot\,;y,z)$ denote the law of $X(t)$. By construction, $\nu_0(\cdot\,;y,z)=\rho_0(\cdot\,;y,z)$. The continuity equation and the uniqueness assumption imply that 
\[
\nu_t(\cdot\,;y,z)=\rho_t(\cdot\,;y,z)\qquad\text{for all }t\in[0,1].
\]
In particular,
\begin{equation}
X(1)\ \overset{d}{=}\ \widetilde X_1\ \overset{d}{=}\ X_1\ \sim\ p_{X'\mid f(Y')=z}.
\end{equation}
Finally, since $(X',Y')\overset{d}{=}(X,Y)$, we conclude
\begin{equation}
X(1)\sim p_{X\mid f(Y)=z},
\end{equation}
which completes the proof.
\end{proof}

\subsection{Proof of Theorem \ref{eq.thm.zero.cond.flow}}
\label{proof.zero.cond.flow}

\begin{theorem}[Conditional Zero-flow Condition Restatement] For any pair $(\xi, \eta)$, 
    if $f(\xi) = \eta$, 
    the velocity field $\bv_{t = 0.5}(\cdot\ ; \eta, \xi) = \bzero$ if and only if \[p_{X|Y} = p_{X|f(Y)}.\] 
\end{theorem}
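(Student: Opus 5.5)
The plan is to reduce the statement to the unconditional result, Theorem \ref{eq.thm.zeroflow}, together with Lemma \ref{lem.vecfield.peqq}, by conditioning. Fix a pair $(\xi,\eta)$ with $f(\xi)=\eta$. As in the proof of Theorem \ref{eq.thm.cond.transport}, the conditional law of $(X,X')$ given $Y=\xi$, $f(Y')=\eta$ equals that of an independent pair $(X_0,X_1)$ with
\[
X_0\sim p_{X\mid Y=\xi},\qquad X_1\sim p_{X\mid f(Y)=\eta}.
\]
This holds because $(X,Y)$ is independent of $(X',Y')$ and $(X',Y')\overset{d}{=}(X,Y)$. Hence
\[
\bv_{t}(\bz;\eta,\xi)=\E\big[X_1-X_0 \,\big|\, (1-t)X_0+tX_1=\bz\big],
\]
which is exactly the unconditional rectified-flow velocity between $p_{X_0}$ and $p_{X_1}$ under independent coupling. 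The key step is to state this identification carefully, using regular conditional distributions and the factorisation of the joint law.

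For the ``if'' direction, assume $p_{X\mid Y}=p_{X\mid f(Y)}$. Then for every $\xi$ we have $p_{X\mid Y=\xi}=p_{X\mid f(Y)=f(\xi)}=p_{X\mid f(Y)=\eta}$, so $X_0$ and $X_1$ are i.i.d. The symmetry argument in the proof of Theorem \ref{eq.thm.zeroflow} then gives
\[
\E[X_1\mid X_0+X_1=2\bz]=\E[X_0\mid X_0+X_1=2\bz],
\]
and therefore $\bv_{0.5}(\bz;\eta,\xi)=\bzero$ for all $\bz$.

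For the ``only if'' direction, assume $\bv_{0.5}(\cdot;\eta,\xi)\equiv\bzero$ for every pair with $f(\xi)=\eta$. For each such pair, apply Lemma \ref{lem.vecfield.peqq} to $(X_0,X_1)$. This gives $p_{X\mid Y=\xi}=p_{X\mid f(Y)=f(\xi)}$ for every $\xi$, which is the equality $p_{X\mid Y}=p_{X\mid f(Y)}$. Lemma \ref{lem.vecfield.peqq} requires non-vanishing characteristic functions and differentiability of $\phi$, i.e.\ finite first moments. I will therefore carry these as standing assumptions on the conditional laws $p_{X\mid Y=\xi}$ and $p_{X\mid f(Y)=\eta}$, stated explicitly as inherited from the lemma.

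I expect the main obstacle to be measure-theoretic rather than conceptual. Conditional expectations and conditional laws are defined only up to null sets, so ``for all $\xi$'' and ``for all $\bz$'' should really read ``for $p_Y$-almost every $\xi$'' and ``for almost every $\bz$''. The pointwise identities above are then a.e. statements. The first thing I would do is phrase both directions almost surely: in the proof of Lemma \ref{lem.vecfield.peqq}, test against $h(S)=e^{i t^\top S}$ only in expectation, which makes null sets harmless. Equality of the conditional laws $P_Y$-a.s. is exactly the meaning of $p_{X\mid Y}=p_{X\mid f(Y)}$, so the conclusion survives this weakening.
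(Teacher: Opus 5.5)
Your proposal is correct and follows essentially the same route as the paper. The paper likewise fixes $(\xi,\eta)$ with $f(\xi)=\eta$ and introduces the independent pair $U = X\mid Y=\xi$, $V = X'\mid f(Y')=\eta$ (your $X_0, X_1$). It then identifies $\bv_{t=0.5}(\bz;\eta,\xi)$ with $\E[V-U\mid U+V=\cdot\,]$, and invokes the exchangeability argument of Theorem~\ref{eq.thm.zeroflow} for sufficiency and Lemma~\ref{lem.vecfield.peqq} for necessity.

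Your extra steps are refinements the paper leaves implicit, not a different argument: stating the non-vanishing characteristic function and first-moment hypotheses explicitly, building the conditional velocity from regular conditional distributions so that its restriction to the graph $\eta=f(\xi)$ is meaningful, and phrasing everything $P_Y$-almost surely.
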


\begin{proof}
    
    Let us we prove that if $p_{X|Y} = p_{X|f(Y)}$, $f(\xi) = \eta$, then $\bv_{t = 0.5}(\bz; \eta, \xi) = \bzero,\forall \bz$.  
    
    Let us define two conditional random variables 
    \[U := X|Y = \xi, ~~~~ V := X'|f(Y') = \eta.\] 
    Due to the condition $p_{X|Y} = p_{X|f(Y)}$, we have $X|Y \stackrel{\text{d}}{=} X|f(Y)$, thus \[U = X|\left\{Y=\xi \right\} = X|\left\{f(Y)=f(\xi)\right\} = X|\left\{f(Y)=\eta\right\}, \] where the last equality is due to the setting $f(\xi) = \eta$. 
    Since $(X, Y) \stackrel{\text{d}}{=} (X', Y')$, we can see that $U \stackrel{\text{d}}{=} V$. 
    Moreover, $(X, Y)$ and $(X', Y')$ are also independent, so $U, V$ are i.i.d.

    Moreover, by definition 
    \[
    \bv_{t = 0.5}(\bz; \eta, \xi) = \E[X' - X | X' + X = \bz, f(Y') = \eta, Y = \xi] = \E[V - U | U + V = \bz].
    \]
    The proof for $\E[V - U | U + V = \bz] = 0$ with i.i.d. $U, V$ follows from the sufficiency proof in Theorem \ref{eq.thm.zeroflow}. 

    We can also express the opposite direction using $U,V$. We want to prove that, for independent variables $U, V$, if
    \begin{align*}
        \E[V - U | V + U = \bz] = 0, \forall \bz, 
    \end{align*}
    then $p_U = p_V$. This is proven in Lemma \ref{lem.vecfield.peqq}.

\end{proof}


\section{Anti-symmetric Characterization of Zero-Flow}
\label{sec.more.res}

In Section \ref{sec.zeroflow}, we showed that the midpoint velocity field $\bv_{t=0.5}$ provides both a necessary and sufficient criterion for distributional equality and conditional sufficiency. In this subsection, we further generalize this result by showing an anti-symmetry property of the rectified velocity field over time.

\begin{theorem}
    Let $\bv_t$ be the optimal velocity field learned by the Rectified Flow objective using independent coupling. $\bv_{t}(\bz) = -\bv_{1-t}(\bz)$ for all $t \in [0, 1]$, if and only if $p_X = p_{X'}$.  
\end{theorem}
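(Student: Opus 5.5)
The plan is to split the equivalence into its two directions. The reverse direction will be reduced to the midpoint result already established in Lemma \ref{lem.vecfield.peqq}. The forward direction will follow from a symmetry (time-reversal) argument that extends the proof of Theorem \ref{eq.thm.zeroflow} from $t=0.5$ to all $t$.

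\emph{Sufficiency ($p_X=p_{X'}$ implies antisymmetry).} I will write the optimal field as $\bv_t(\bz)=\E[X'-X\mid (1-t)X+tX'=\bz]$. Since $X,X'$ are independent and identically distributed, the pair $(X,X')$ is exchangeable, so $(X,X')\stackrel{\text{d}}{=}(X',X)$. Swapping the two variables inside the conditional expectation gives
\[
\bv_t(\bz)=\E[X-X'\mid (1-t)X'+tX=\bz]=-\E[X'-X\mid tX+(1-t)X'=\bz]=-\bv_{1-t}(\bz),
\]
because $tX+(1-t)X'$ is exactly the interpolant at time $1-t$. This holds for every $t\in[0,1]$. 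Formally, the identity of conditional expectations holds for almost every $\bz$ under the law of the interpolant. I will note that the interpolant has the same law at times $t$ and $1-t$, so the identity can be stated consistently with respect to a single reference measure.

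\emph{Necessity (antisymmetry implies $p_X=p_{X'}$).} I will specialize the hypothesis to $t=0.5$. It then reads $\bv_{0.5}(\bz)=-\bv_{0.5}(\bz)$, so $\bv_{0.5}\equiv\bzero$. Lemma \ref{lem.vecfield.peqq} then yields $p_X=p_{X'}$, under its standing assumption of non-vanishing characteristic functions. Equivalently, this step is the converse direction of Theorem \ref{eq.thm.zeroflow}. This also explains the paper's remark that Theorem \ref{eq.thm.zeroflow} is a special case of the antisymmetry statement.

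\emph{Main obstacle.} The algebra is short, so the delicate part is measure-theoretic. In the sufficiency direction, ``for all $\bz$'' must be read as holding almost everywhere, or everywhere after choosing versions of the conditional expectations. I will handle this by fixing a version of $\bv_t$ for $t\le 0.5$ and \emph{defining} $\bv_{1-t}$ through the swapped representation. The exchangeability argument shows that this definition is a valid version of the conditional expectation. In the necessity direction, the only cost is inheriting the characteristic-function hypothesis from Lemma \ref{lem.vecfield.peqq}. I will state this assumption explicitly rather than try to remove it.
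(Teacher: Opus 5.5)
Your proposal is correct and follows the paper's proof essentially verbatim. In the forward direction, exchangeability of the i.i.d.\ pair $(X,X')$ turns $\bv_t$ into $-\bv_{1-t}$; in the converse, you specialize to $t=0.5$ to get $\bv_{0.5}\equiv\bzero$ and then invoke the midpoint result (Theorem~\ref{eq.thm.zeroflow}, via Lemma~\ref{lem.vecfield.peqq}).

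Your extra measure-theoretic care (a.e.\ versions under the common law of $X_t$ and $X_{1-t}$, and stating the characteristic-function hypothesis explicitly) is sound and goes slightly beyond what the paper writes. The one small wrinkle is that at $t=0.5$ your ``define $\bv_{1-t}$ by the swap'' step forces you to choose the identically-zero version of $\bv_{0.5}$. That choice is legitimate, since $\bv_{0.5}=\bzero$ almost everywhere.
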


\begin{proof}
    First we prove that if $p_X =p_{X'}$, then $\bv_{t} = -\bv_{1-t}, \forall t$. By definition, 
    \begin{align*}
        \bv_{t}(\bz) &= \E[X' - X| tX' + (1-t)X = \bz] \\
        \bv_{1-t}(\bz) & = \E[X' - X| (1-t)X' + tX = \bz]
    \end{align*}

    If $p_X = p_{X'}$, $X$ and $X'$ are i.i.d. random variables, thus are exchangeable in the above formulas, 
    \begin{align}
      \E[X' - X| tX' + (1-t)X = \bz] &= \E[X - X'| tX + (1-t)X' = \bz]  \notag \\
      &= - \E[X' - X| (1-t)X' + tX = \bz]\notag \\
      & = - \bv_{1-t}(\bz). 
    \end{align}
    The opposite is true is due to the fact that \[\bv_t(\bz) = -\bv_{1-t}(\bz), \forall t  \implies 2\bv_{0.5}(\bz) = 0. \]
    As we stated in Theorem \ref{eq.thm.zeroflow}, this ensures that $p_X = p_{X'}$. 
\end{proof}

\section{Implementation of Markov Blanket Learning Experiments in Section \ref{sec.exp.markov.blanket}}
\label{app:impl_ggm}
This section describes the exact implementation details for the graphical model recovery experiments.

\paragraph{Data generation.}
We generate synthetic data from $d$-dimensional Gaussian graphical models with precision matrix $\Theta \in \mathbb{R}^{d\times d}$.
Two classes of graph structures are considered.

\emph{(i) Markov chain.}
In the main setting, $\Theta$ corresponds to a $k$-th order Markov chain, i.e.,
\[
\Theta_{ij} \neq 0 \quad \Longleftrightarrow \quad 0 < |i-j| \le k,
\]
with $k=3$.
The nonzero off-diagonal entries are assigned fixed weights $\{w_r\}_{r=1}^k = \{0.8, 0.4, 0.2\}$, and the diagonal is set to ensure strict diagonal dominance and positive definiteness.

\emph{(ii) Lattice grid.}
In the second setting, variables are arranged on a two-dimensional lattice of size $p \times p$ with $d=p^2$.
Each node is connected to its four immediate spatial neighbors.
All nonzero off-diagonal entries are assigned a constant weight, and the diagonal is again chosen to ensure positive definiteness.

In both cases, we define $\Sigma = (\Theta)^{-1}$ and generate a latent Gaussian vector
\[
\tilde{Z} \sim \mathcal{N}(0, \Sigma).
\]
From this latent model, we construct three data regimes used in the experiments:

\emph{(a) Gaussian.}
The standard setting directly uses $Z = \tilde{Z}$, yielding data drawn exactly from the Gaussian graphical model.

\emph{(b) Non-paranormal.}
To obtain non-Gaussian but conditionally independent data with the same copula structure, we apply a monotone marginal transformation
\[
Z_i = \mathrm{sign}(\tilde{Z}_i)\,|\tilde{Z}_i|^{\gamma}, \quad \gamma=3,
\]
followed by standardization to zero mean and unit variance for each coordinate. This preserves the underlying conditional independence graph while inducing heavy-tailed non-Gaussian marginals.

\emph{(c) Truncated Gaussian.}
To model distributional shift and selection bias, we generate data from the truncated distribution
\[
Z = \tilde{Z} \;\big|\; \tilde{Z}_i > \tau \;\; \forall i,
\]
with truncation threshold $\tau=-0.75$. This yields a non-Gaussian distribution whose support is restricted to a half-space, while retaining the same latent graphical structure.

All experiments use $d=50$ for the Markov chain and $d=64$ for the lattice grid, with training set size $n_{\mathrm{train}}=2{,}048$.




\paragraph{Model architecture.}
\label{sec.model.arch}

\begin{figure}
    \centering
    \includegraphics[width=1\linewidth]{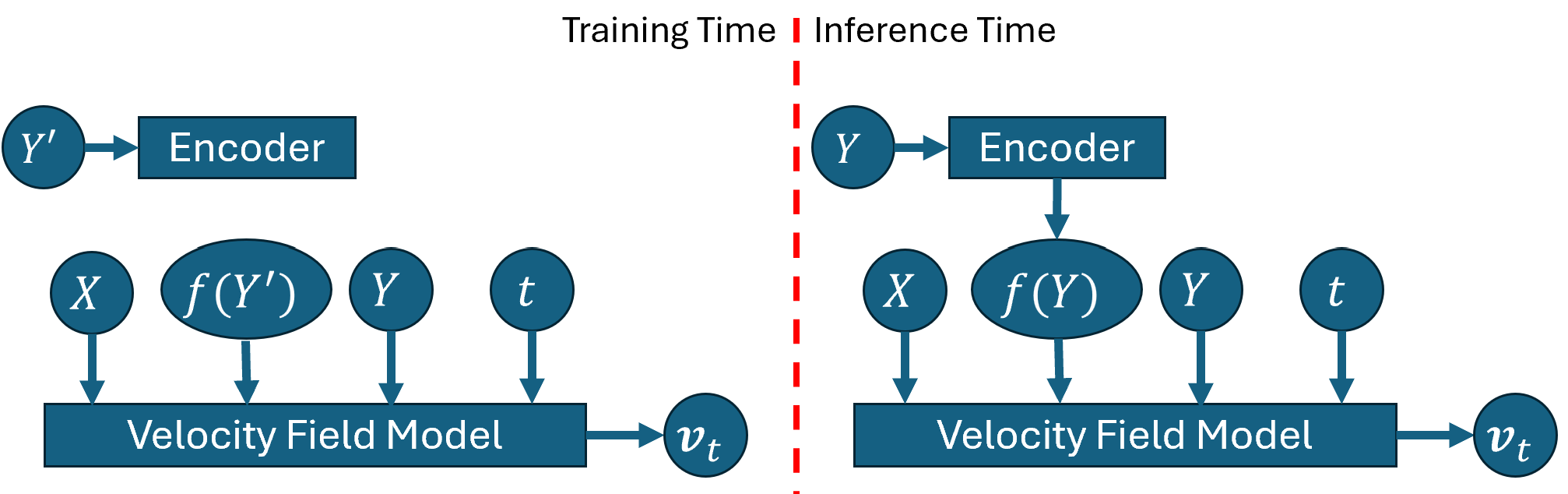}
    \caption{The neural network architecture for learning Markov blanket Section \ref{sec.exp.markov.blanket} during inference time.}
    \label{fig.blanket.nn}
\end{figure}

The model consists of an encoder network and a velocity field network. The schematic of this network is plotted in Figure \ref{fig.blanket.nn}. Note that there is a minor difference in the network inputs between training and inference. In the following text, we assume the inference time setting. 

One of the encoder $f(Y;\bm)$ we consider is an MLP encoder, denoted as
\[
\texttt{Encoder}(d, 128),
\]
where $\sigma_\beta$ is a two-layer ReLU network with $128$ hidden units. 

The second type of encoder $f(Y;\bm)$ we consider is an LSTM encoder, where $\sigma_\beta$  is a bidirectional LSTM layer with $32$ hidden units, and a linear projection layer mapping the output to $ d$-dimensional. It is denoted as 
\[
\texttt{LSTMEncoder}(d, 32),
\]

The third type of encoder is a CNN encoder, where $\sigma_\beta$ first reshapes $\bm$ to a $\sqrt{d} \times \sqrt{d}$ image, then passes through three layers of CNN with 32 hidden channels, and RELU activation functions. The output from such a CNN network is flattened before returned. 
\[
\texttt{CNNEncoder}(d, 32). 
\]

The velocity field is an MLP with $256$ hidden units, denoted as
\[
\texttt{VectorField}(d, 256),
\]
which takes as input the interpolation $X_t$, the encoder output $f(Y)$, $Y$, the mask $m$, and time $t$, and outputs a velocity prediction $\hat v_t \in \mathbb{R}^d$ during inference time.  

\paragraph{Rectified flow objective.}
Given two independent samples $z_1,z_2 \sim \mathcal{N}(0,\Sigma^\star)$ and a mask $m$, we form
\[
x = z_1 \odot m, \quad y = z_1 \odot (1-m), \qquad
x' = z_2 \odot m, \quad y' = z_2 \odot (1-m).
\]
We sample $t \in (0,1)$ from a symmetric Beta distribution $t \sim \mathrm{Beta}(\alpha,\alpha)$ with $\alpha=4$ and define the interpolation
\[
x_t = t x' + (1-t) x.
\]
The rectified flow loss is
\[
\mathcal{L}_{\mathrm{RF}}
= \mathbb{E}\big[ \| (x' - x - \hat v_t) \odot m \|_2^2 \big].
\]

\paragraph{Zero-flow regularization.}
To encourage vanishing velocity near the midpoint of the flow, we introduce a time-local penalty
\[
\mathcal{L}_{\mathrm{ZF}}
= \mathbb{E}\big[ \omega(t) \| \hat v_t \odot m \|_2^2 \big],
\]
where 
\[
\omega(t) = \exp(-|t-0.5|/b) \cdot
\]
with bandwidth $b = 5 \times 10^{-4}$.

\paragraph{Gate sparsity.}
We regularize the encoder gates using an $\ell_1$-style penalty
\[
\mathcal{L}_{\ell_1} = \sum_{j=1}^d g_j,
\]
weighted by $\lambda = 3 \times 10^{-9}$.
This encourages sparse Markov blankets.

\paragraph{Total training objective.}
The full objective is
\[
\mathcal{L}
= \mathcal{L}_{\mathrm{RF}} + \mathcal{L}_{\mathrm{ZF}} + \lambda \mathcal{L}_{\ell_1}.
\]

\paragraph{Optimization.}
We optimize both networks jointly using AdamW with learning rate $10^{-4}$ and batch size $400$.
Training is run for $5{,}000$ iterations on a single GPU.

\paragraph{Edge recovery and ROC evaluation.}
To quantify structure recovery, we assign each unordered pair of nodes $(i,j)$ an \emph{edge score} $s_{ij}\in\mathbb{R}$ produced by the learned model (larger scores indicate stronger evidence of an edge). 
For our method, we obtain directional scores from masked conditional queries: for each target node $i$ we feed the encoder a one-hot target mask $m=e_i$ and read out the gate values $g_i(j)$ measuring how strongly node $j$ is selected to predict $i$. 
We then form an undirected score by symmetrization,
\[
s_{ij} \;=\; \max\{g_i(j),\, g_j(i)\}, \qquad i<j.
\]
Ground-truth edges are defined from the true precision matrix $\Theta^\star$ as
\[
y_{ij} \;=\; \mathbb{I}\{|\Theta^\star_{ij}|>\varepsilon\}, \qquad i<j,
\]
where $\varepsilon$ is a small numerical threshold (and we ignore diagonal entries).
Given scores $\{s_{ij}\}_{i<j}$ and labels $\{y_{ij}\}_{i<j}$, we sweep a decision threshold $\tau$ over the score range and predict an edge whenever $s_{ij}\ge \tau$. 
For each $\tau$, we compute the true positive rate and false positive rate,
\[
\mathrm{TPR}(\tau)=\frac{\sum_{i<j}\mathbb{I}\{y_{ij}=1,\, s_{ij}\ge\tau\}}{\sum_{i<j}\mathbb{I}\{y_{ij}=1\}}, 
\qquad
\mathrm{FPR}(\tau)=\frac{\sum_{i<j}\mathbb{I}\{y_{ij}=0,\, s_{ij}\ge\tau\}}{\sum_{i<j}\mathbb{I}\{y_{ij}=0\}},
\]
which traces out the ROC curve $\{(\mathrm{FPR}(\tau),\mathrm{TPR}(\tau))\}_{\tau}$; the AUC is computed as the area under this curve. 
This evaluation is threshold-free and measures how well the method ranks true edges above non-edges.

\paragraph{Baseline (Graphical Lasso).}
As a classical likelihood-based baseline, we fit a sparse Gaussian graphical model using the $\ell_1$-penalized log-likelihood (graphical lasso). 
The regularization strength (penalty parameter) is selected via cross-validation using \texttt{sklearn}’s \texttt{GraphicalLassoCV}, and the resulting estimated precision matrix $\widehat{\Theta}_{\mathrm{gl}}$ is converted into edge scores using $s_{ij}=|\widehat{\Theta}_{\mathrm{gl},ij}|$ for $i<j$. 
We then compute ROC and AUC using the same procedure and the same ground-truth labels $y_{ij}$.

\paragraph{PC baseline.}
We compare our method with the PC algorithm \citep{spirtes2000causation}, a constraint-based causal discovery method that infers graph structure by testing conditional independencies. Let $X=(X_1,\dots,X_d)\sim \mathcal{N}(0,\Sigma)$ with precision matrix $\Theta=\Sigma^{-1}$. PC performs a sequence of statistical tests of the null hypothesis $H_0: X_i \indep X_j \mid X_S$ using Fisher's Z test, which is based on the sample partial correlation $\hat\rho_{ij\mid S}$. For a given significance level $\alpha$, the algorithm returns a graph with adjacency matrix $\widehat A(\alpha)\in\{0,1\}^{d\times d}$, which we interpret as a binary estimator of the conditional independence structure. The ground-truth adjacency is defined by $A^\star_{ij}=\mathbb{I}\{|\Theta_{ij}|>0\}$, so that the true Markov blanket of $X_i$ is $\mathrm{MB}(X_i)=\{j\neq i:\Theta_{ij}\neq 0\}$. To evaluate structure recovery, we sweep $\alpha$ over a grid and compute
\[
\mathrm{TPR}(\alpha)
=\frac{\sum_{i<j}\mathbb{I}\{\widehat A_{ij}(\alpha)=1\wedge A^\star_{ij}=1\}}
{\sum_{i<j}\mathbb{I}\{A^\star_{ij}=1\}},\quad
\mathrm{FPR}(\alpha)
=\frac{\sum_{i<j}\mathbb{I}\{\widehat A_{ij}(\alpha)=1\wedge A^\star_{ij}=0\}}
{\sum_{i<j}\mathbb{I}\{A^\star_{ij}=0\}}.
\]
This yields a ROC curve for each run. We repeat the experiment over multiple independent datasets and report the mean AUC with a $\pm1$ standard deviation band, providing a direct comparison with our method for Markov blanket recovery.

\section{Additional Results for Learning Markov Blanket in Section \ref{sec.exp.markov.blanket}}
\label{sec.markov.more.results}
\begin{figure}
    \centering
    \includegraphics[width= 1 \linewidth]{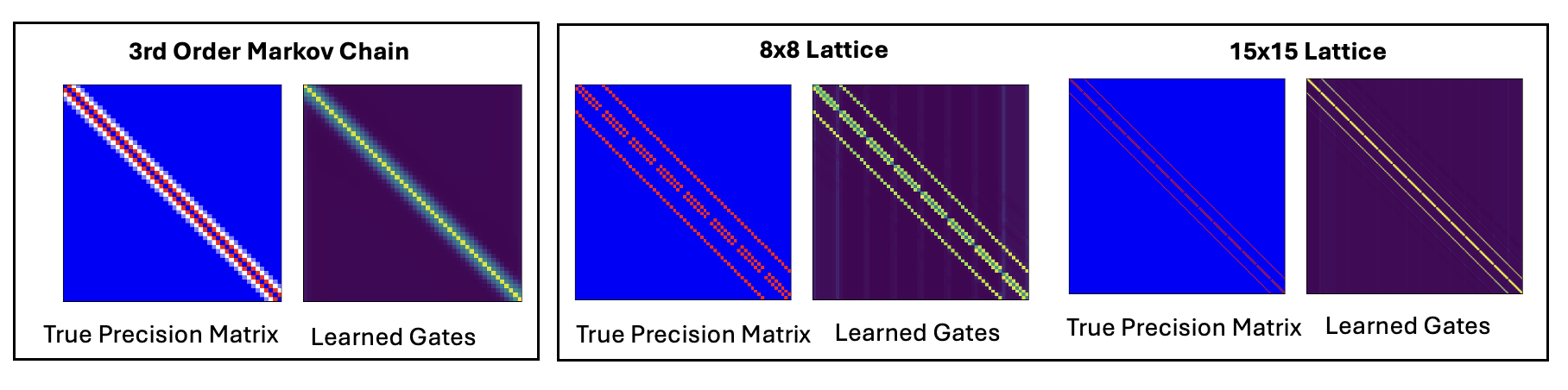}
    \caption{Comparison between true precision matrices and learned neural gates on two different graphical model structures. The non-zero gates align well with the non-zero entries of the true precision matrices.  
    }
    \label{fig:graph}
\end{figure}


\subsection{Learning Structure of Graphical Model}

In this experiment, we compare the learned neural gates with the true precision matrix
using two classic Gaussian graphical models: 3rd-order Markov chain and the lattice model. The data is generated using the technique described in Section \ref{sec.exp.markov.blanket}. 
We learn the graphical model structure using two encoder architectures: LSTM and CNN. See Section \ref{sec.model.arch} for details. 

The results are shown in Figure \ref{fig:graph}. It can be seen that the learned neural gates accurately reflect the sparsity pattern in the precision matrix. 
These graphical models are recovered using 2048 samples generated from a 50-dimensional 3rd order Markov Chain and a 64- and 225-dimensional Lattice structure introduced in Section \ref{sec.exp.markov.blanket}. 
We set $\texttt{VectorField}(d, 256)$ which is consistent with our ROC experiments. 

Notice that in the case of 15 x 15 lattice, the number of total potential edges in the graphical model is $225\times 224 / 2 = 25,088$, much larger than the sample size $2048$. Thus, inductive bias is almost necessary for recovering the graphical model.

\section{Implementation of Representation Learning Experiments in Section \ref{sec.learning.latent.representation}}
\label{sec.imp.rep.learning}
This section describes the implementation details for the Shortcut problem experiments.

\subsection{Data augmentation and watermarking}
\subsubsection{Data Augmentation}
The data augmentation is done via the following pipeline: 

\begin{enumerate}
    \item \textbf{Random Resized Crop:} A region of the image is randomly cropped and resized to a fixed resolution of $32 \times 32$. The scale of the cropped area is sampled uniformly from the range $[0.8, 1.0]$ of the original image area. 
    \item \textbf{Random Horizontal Flip:} The cropped image is horizontally mirrored with a probability of $p=0.5$, introducing reflection invariance.
\end{enumerate}

\subsubsection{Watermarking via Random Color Injection}

In this strategy, we assign a random RGB vector to each image in the dataset, effectively "watermarking" the originally grayscale inputs with a unique color profile, thereby providing a shortcut for representation-learning algorithms.  

\subsubsection{Watermarking via Random Patching}
In this strategy, we implement a patch-based watermarking scheme. We generate a unique, fixed $10 \times 10$ pixel RGB patch, $\mathbf{p}_i \in [0, 1]^{3 \times 10 \times 10}$, for every sample index $i$ in the dataset. These patches are sampled from a uniform distribution $\mathcal{U}(0, 1)$. This effectively assigns a high-frequency "watermark" to each image, providing a localized shortcut for the representation-learning algorithms.



\subsection{Zero-flow Network Architecture}

In this section, we describe the neural network architecture used in the experiments. An illustration of the network can be found in Figure \ref{fig.rep.learning.nn.arch}. Note that there is a minor difference in the network inputs between training and inference. In the following text, we assume the inference time setting. 

\begin{figure}
    \centering
    \includegraphics[width=1\linewidth]{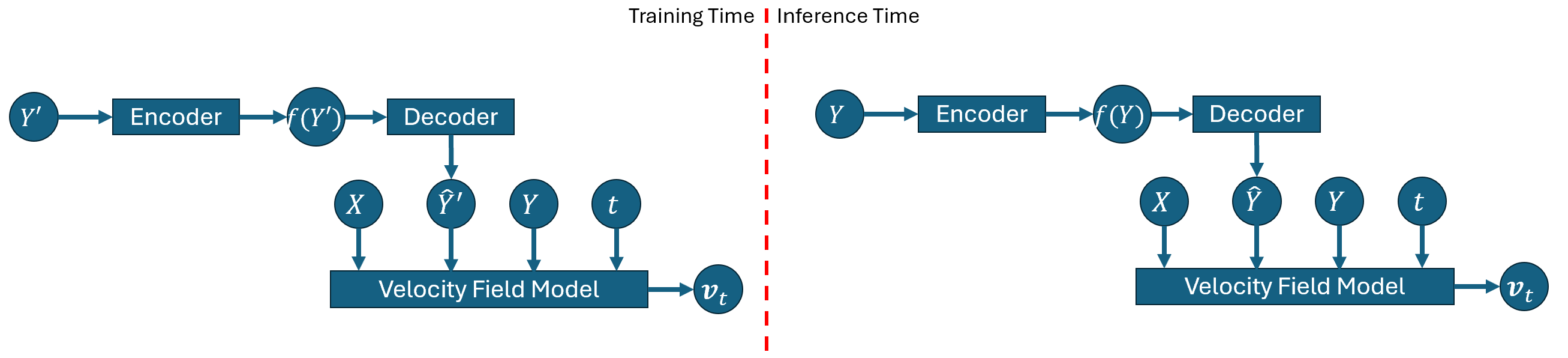}
    \caption{The neural network architecture for the representation learning Section \ref{sec.learning.latent.representation}. }
    \label{fig.rep.learning.nn.arch}
\end{figure}

Let $X, Y \in \mathbb{R}^{3 \times H \times W}$ denote the input images with height $H$ and width $W$, and let $f(Y) \in \mathbb{R}^{L}$ denote the latent vector of dimension $L$.

\subsubsection{Encoder}
\label{sec.reprsentation.encoder}
For both SimCLR and zero-flow, we use the same encoder architecture to ensure fair comparison. 
The encoder, denoted as $E(\cdot)$, processes the input image $Y$ through a series of convolutional layers followed by a dense projection. 

\begin{enumerate}
    \item \textbf{Feature Extraction:} The input $Y$ is passed through two convolutional blocks. Each block consists of a convolution with a $3 \times 3$ kernel, stride 1, and padding 1, followed by a ReLU activation. The channel dimensions increase from 3 (RGB) to 16.
    \item \textbf{Channel Compression:} A third convolutional layer maps the 16 feature channels down to a single channel ($C=1$), maintaining the spatial resolution $H \times W$.
    \item \textbf{Latent Projection:} The resulting $1 \times H \times W$ feature map is flattened into a vector of size $HW$. A fully connected (linear) layer projects this vector to the final latent dimension $L$.
\end{enumerate}

Formally, the encoder operation can be summarized as:
\begin{equation}
    \mathbf{z} = \text{Linear}_{HW \to L}(\text{Flatten}(\text{Conv}_{16 \to 1}(\dots)))
\end{equation}

\subsubsection{Decoder}
The decoder, $D(\cdot)$, utilizes an upsampling strategy to reconstruct the image $\hat{Y}$ from $f(Y)$. 


\textbf{Upsampling Blocks:}
The feature map is progressively upsampled to the target resolution $H \times W$. Each upsampling block consists of:
\begin{itemize}
    \item \textbf{Transposed Convolution:} A kernel size of $4 \times 4$, stride 2, and padding 1 is used to exactly double the spatial dimensions.
    \item \textbf{Batch Normalization \& ReLU:} Applied after every transposed convolution except the output layer.
    \item \textbf{Channel Reduction:} The channel depth is halved at each layer (e.g., $64 \to 32 \to 16$) until the final layer.
\end{itemize}

The final layer maps the features to 3 channels followed by a Sigmoid activation function to constrain the output pixel values to $[0, 1]$.

\begin{equation}
    \hat{Y} = \sigma(\text{ConvTranspose}(\dots))
\end{equation}

\subsubsection{Velocity Field Architecture}
The velocity field network, denoted as $V(\cdot)$, models the time derivative of the system state, $\frac{d\mathbf{x}}{dt}$. 

Unlike the encoder, this network operates on a spatially dense representation. To predict the update for the current state $X$, the network conditions on three auxiliary inputs: $Y$, the decoded representation $\hat{Y}$, and the scalar time $t$.

\paragraph{Input Alignment and Concatenation}
Since the inputs possess varying dimensions (scalars, vectors, and image tensors), they must first be aligned to a common spatial resolution $H \times W$ before processing. The alignment process is defined as follows:

\begin{enumerate}
    \item \textbf{$X_t$:} The input state is reshaped to the standard image tensor format $\mathbf{X} \in \mathbb{R}^{3 \times H \times W}$.
    \item \textbf{$Y$:} The vector is reshaped to match the spatial dimensions, resulting in a tensor $\mathbf{Y} \in \mathbb{R}^{3 \times H \times W}$.
    \item \textbf{Decoded ($\hat{Y})$} The latent vector reshaped to match the spatial dimensions, resulting a tensor: $\hat{\mathbf{Y}} = D(f(Y)) \in \mathbb{R}^{3 \times H \times W}$.
    \item \textbf{Time Embedding ($t$):} The scalar time input is spatially expanded across the height and width, forming a constant feature map $\mathbf{T} \in \mathbb{R}^{1 \times H \times W}$.
\end{enumerate}

These four tensors are concatenated along the channel dimension to form the input tensor $\mathbf{I}_{in}$:
\begin{equation}
    \mathbf{I}_{in} = \text{Concat}(\mathbf{x}, \mathbf{Y}, \hat{\mathbf{Y}}, \mathbf{T}) \in \mathbb{R}^{(3+3+3+1) \times H \times W} = \mathbb{R}^{10 \times H \times W}
\end{equation}

\paragraph{Convolutional Processing}
The concatenated input $\mathbf{I}_{in}$ is processed by a shallow Convolutional Neural Network (CNN) to compute the gradient field. The network consists of three convolutional layers with $3 \times 3$ kernels, stride 1, and padding 1 to preserve spatial resolution:

\begin{itemize}
    \item \textbf{Layer 1:} Maps 10 input channels to 16 hidden channels, followed by a ReLU activation.
    \item \textbf{Layer 2:} Maps 16 channels to 16 channels, followed by a ReLU activation.
    \item \textbf{Layer 3 (Output):} Maps 16 channels to 3 output channels (corresponding to RGB velocity). No activation function is applied, allowing the vector field to span the full real range $(-\infty, \infty)$.
\end{itemize}

The output is flattened to a vector of size $3HW$, representing the derivative at every pixel location.

\subsubsection{Training}

We train both SimCLR and the Zero-flow encoders with 5000 iterations on the training set, using Adam optimizer with a 10e-4 learning rate.

\subsection{MAE Model Architecture}
The Masked Autoencoder (MAE, \cite{he2022masked}) leverages Transformer architectures \cite{vaswani2017attention} to capture global contextual dependencies via self-attention, which fundamentally differs from the previous two convolutional-based models. To ensure a fair comparison without introducing excessive model capacity, we adopt a streamlined Transformer-based architecture for both the encoder and the decoder.

\subsubsection{Random Masking}
To enable masked image modeling, a random masking operation is applied to the patch token sequence prior to encoding. The masking procedure is summarized as follows:

\begin{enumerate}
    \item \textbf{Patch Tokenization:}
    The input image $\mathbf{x} \in \mathbb{R}^{C \times H \times W}$ is partitioned into non-overlapping $4 \times 4$ patches and linearly projected into a sequence of patch tokens
    $\mathbf{X} \in \mathbb{R}^{N \times D_e}$,
    where $D_e = 192$ and $N = HW / 16$.

    \item \textbf{Random Masking:}
    A random subset of patch tokens is removed according to a predefined mask ratio $r = 0.75$, yielding a set of visible tokens
    $\mathbf{X}_{\mathrm{vis}} \in \mathbb{R}^{N_{\mathrm{vis}} \times D_e}$,
    where $N_{\mathrm{vis}} = (1-r)N$.

    \item \textbf{Encoder Input:}
    The visible patch tokens are concatenated with a learnable class token and passed to the encoder.
\end{enumerate}

Formally, the masking operation can be written as
\begin{equation}
\mathbf{X}_{\mathrm{vis}}
=
\mathrm{Select}_{N_{\mathrm{vis}}}
\!\left(
\mathrm{PatchEmbed}_{4\times4}(\mathbf{x})
\right),
\end{equation}
where $\mathrm{Select}_{N_{\mathrm{vis}}}(\cdot)$ denotes random selection of $N_{\mathrm{vis}}$ patch tokens.

\subsubsection{Encoder}
The encoder adopts a tiny Vision Transformer (ViT-Ti, \citep{touvron2021training}) architecture and maps the partially observed patch tokens to latent representations. Given the visible patch tokens produced by the random masking process, the encoding procedure is defined as follows:

\begin{enumerate}
    \item \textbf{Positional Encoding:}
    Positional embeddings are added to each visible patch token to encode spatial information. These embeddings are implemented as learnable absolute positional embeddings and are jointly optimized with the encoder parameters.

    \item \textbf{Class Token Injection:}
    A learnable class token is augmented with its corresponding positional embedding and prepended to the sequence of visible patch tokens, serving as a global image-level representation.

    \item \textbf{Transformer Encoding:}
    The resulting token sequence is processed by a Transformer encoder consisting of 12 Transformer blocks, each equipped with 3 self-attention heads. Layer normalization is applied to the encoder output.
\end{enumerate}

Formally, the encoder operation can be summarized as
\begin{equation}
\mathbf{H}_{\mathrm{enc}}
=
\mathrm{LN}\!\left(
\mathrm{Transformer}_{12,3}\!\left(
\mathrm{Concat}\!\left(
\mathbf{c} + \mathbf{p}_{\mathrm{cls}},
\ \mathbf{X}_{\mathrm{vis}} + \mathbf{P}_{\mathrm{vis}}
\right)
\right)
\right)
\in \mathbb{R}^{(1+N_{\mathrm{vis}})\times D_e},
\end{equation}
where $\mathbf{X}_{\mathrm{vis}}$ denotes the visible patch tokens, $\mathbf{P}_{\mathrm{vis}}$ denotes their positional embeddings, $\mathbf{c}$ denotes the learnable class token, and $\mathbf{p}_{\mathrm{cls}}$ denotes its positional embedding.

\subsubsection{Decoder}
The decoder maps the latent representations produced by the encoder to patch-level pixel predictions using a lightweight Transformer architecture. Let
$\mathbf{H}_{\mathrm{enc}} \in \mathbb{R}^{(1+N_{\mathrm{vis}})\times D_e}$
denote the encoder output including the class token. The decoding procedure is defined as follows:

\begin{enumerate}
    \item \textbf{Class Token Removal and Latent Projection:}
    The class token is removed from the encoder output, yielding
    $\mathbf{H}_{\mathrm{vis}} \in \mathbb{R}^{N_{\mathrm{vis}}\times D_e}$,
    which contains only the patch-level latent representations. These representations are then linearly projected from the encoder embedding dimension $D_e$ to a lower-dimensional decoder embedding space with dimension $D_d = 128$.

    \item \textbf{Mask Token Injection:}
    Learnable mask tokens are appended to the projected latent representations to form a complete token sequence for decoding, corresponding to both visible and masked patches.

    \item \textbf{Transformer Decoding and Patch Prediction:}
    Learnable absolute positional embeddings are added to the restored token sequence to encode spatial information. These embeddings are optimized jointly with the decoder parameters. The position-augmented tokens are then processed by a Transformer decoder consisting of 2 Transformer blocks with 4 self-attention heads each. The decoder output is passed through layer normalization and a linear prediction head that projects the decoder embeddings to patch-level pixel values of dimension $4 \times 4 \times C$.

\end{enumerate}

Formally, the decoder operation can be summarized as
\begin{equation}
\hat{\mathbf{X}}
=
\mathrm{Linear}_{D_d \rightarrow 4 \times 4 \times C}\!\left(
\mathrm{LN}\!\left(
\mathrm{Transformer}_{2,4}\!\left(
\mathrm{Concat}\!\left(
\mathrm{Proj}_{D_e \rightarrow D_d}(\mathbf{H}_{\mathrm{vis}}),\ \mathbf{m}
\right)
+ \mathbf{P}
\right)
\right)
\right),
\end{equation}
where $\mathbf{H}_{\mathrm{vis}} \in \mathbb{R}^{N_{\mathrm{vis}}\times D_e}$ denotes the encoder outputs after removing the class token, $\mathbf{m}$ denotes the learnable mask tokens, and $\mathbf{P}$ denotes the decoder positional embeddings.

The reconstruction loss is computed only on the masked patches, encouraging the encoder to infer missing content from global context.
\[
\mathcal{L}_{\mathrm{MAE}}(f)
=
\mathbb{E}\big[\|\mathbf{X}_{-\bm} - \mathrm{Decoder}(\mathrm{Encoder}(\mathbf{X}_{\bm}))\|^2\big],
\]
After pretraining, the decoder is discarded, and only the pretrained encoder is retained for downstream evaluation.

\subsubsection{Linear Probing Protocol}
To evaluate the quality of representations learned by the encoder, we adopt a linear probing protocol consistent with the standard evaluation setting used in MAE. During linear probing, the encoder is used as a fixed feature extractor. The procedure is defined as follows:

\begin{enumerate}
    \item \textbf{Frozen Encoder with Global Average Pooling:}
    A deep copy of the pretrained encoder is used to prevent any modification of the original model. All encoder parameters are frozen.
    To obtain a single image-level representation, global average pooling is applied to the encoder outputs: the class token is discarded, and the remaining patch token representations are averaged across the spatial dimension, followed by layer normalization.

    \item \textbf{Linear Classifier Training:}
    A linear classifier $\mathrm{LC_\theta}$ is trained on top of the frozen encoder representations using labeled training data by minimizing the cross-entropy loss.

    \item \textbf{Evaluation on Clean Test Data:}
    The trained linear classifier is evaluated on a clean and normalized test set that is independent of the training data. Classification accuracy on this test set is reported as the linear probing performance.
\end{enumerate}

Formally, given an input image $\mathbf{x}$ and its label $y \in \{1,\dots,K\}$, let
$\mathbf{z} = \mathrm{LN}\!\left(\frac{1}{N}\sum_{i=1}^{N}\mathrm{Encoder}(\mathbf{x})_i\right)$
denote the pooled encoder representation, where $\mathrm{Encoder}(\mathbf{x})_i$ is the representation of the $i$-th visible patch token and $N$ is the number of visible patches. The linear classifier $\mathrm{LC}_\theta$ is trained by minimizing the cross-entropy loss
\begin{equation}
\mathcal{L}_{\mathrm{CE}}(\mathbf{x}, y)
=
- \log
\mathrm{Softmax}\!\left(
\mathrm{LC}_\theta(\mathbf{z})
\right)_y .
\end{equation}
The predicted label is then obtained as
$\hat{y} = \arg\max_{k \in \{1,\dots,K\}} \mathrm{LC}_\theta(\mathbf{z})_k$,
where $K$ denotes the number of classes.

The linear probing accuracy is computed as
\begin{equation}
\mathrm{Acc}
=
\frac{1}{|\mathcal{D}_{\mathrm{test}}|}
\sum_{(\mathbf{x}, y) \in \mathcal{D}_{\mathrm{test}}}
\mathbb{I}\!\left(\hat{y} = y\right),
\end{equation}
where $\mathcal{D}_{\mathrm{test}}$ denotes the clean test dataset and $\mathbb{I}(\cdot)$ is the indicator function.

To reduce the effect of randomness, the entire linear probing procedure is repeated five times with different random seeds. The final reported performance is the mean linear probing accuracy over 5 runs.

\subsubsection{MAE Training}
We train the MAE model for 100 epochs on the training set using the AdamW optimizer, with a learning rate of 10e-4 and a weight decay of $0.05$.

\end{document}